\crefname{prop}{Proposition}{Propositions}
\crefname{thm}{Theorem}{Theorems}
\crefname{sec}{Section}{Sections}
\definecolor{darkgreen}{rgb}{0,0.5,0}
\definecolor{darkblue}{rgb}{0,0,0.7}
\definecolor{darkred}{rgb}{0.9,0.1,0.1}
\numberwithin{figure}{section}
\numberwithin{equation}{section}
\numberwithin{table}{section}
\theoremstyle{plain}
\newtheorem{thm}{\protect\theoremname}[section]
\theoremstyle{definition}
\newtheorem{defn}[thm]{\protect\definitionname}
\theoremstyle{plain}
\newtheorem{prop}[thm]{\protect\propositionname}
\theoremstyle{remark}
\newtheorem{rem}[thm]{\protect\remarkname}
\providecommand{\definitionname}{Definition}
\providecommand{\propositionname}{Proposition}
\providecommand{\remarkname}{Remark}
\providecommand{\theoremname}{Theorem}
\newcommand{\e}{\mathrm{e}}%
\newcommand{\R}{\mathbf{R}}%
\newcommand{\Rd}{{\mathbf{R}^d}}
\newcommand{\dist}{\operatorname{dist}}%
\newcommand{\supp}{\operatorname{supp}}%
\newcommand{\diam}{\operatorname{diam}}%
\newcommand{\esssup}{\operatorname*{ess\,sup}}%
\newcommand{\id}{\operatorname{id}}%
\renewcommand{\le}{\leqslant}
\renewcommand{\ge}{\geqslant}
\renewcommand{\subset}{\subseteq}
\newcommand{\Ll}{\left}
\newcommand{\Rr}{\right}
\newcommand{\dr}{\partial}
\newcommand{\dif}{\mathrm{d}}%
\newcommand{\mcl}{\mathcal}
\newcommand{\de}{\delta}
\newcommand{\ga}{\gamma}
\newcommand{\eps}{\varepsilon}%
\newcommand{\ep}{\varepsilon}%
\newcommand{\la}{\lambda}
\renewcommand{\d}{\mathrm{d}}
\renewcommand{\tilde}{\widetilde}
\renewcommand{\bar}{\overline}
\newcommand{\BV}{\operatorname{BV}}
\newcommand{\centroid}[1]{\operatorname{cent}_\mu}
\newcommand{\mres}{\mathbin{\vrule height 1.4ex depth 0pt width
0.13ex\vrule height 0.13ex depth 0pt width 1ex}}
\newcommand{\mressmall}{\mathbin{\vrule height 1ex depth 0pt width
0.1ex\vrule height 0.1ex depth 0pt width 0.7ex}}
\begin{document}
\title{Local versions of sum-of-norms clustering}

\author[A. Dunlap]{Alexander Dunlap}
\address[A. Dunlap]{Courant Institute of Mathematical Sciences, New York University, New York, NY 10012 USA}
\email{alexander.dunlap@cims.nyu.edu}

\author[J.-C. Mourrat]{Jean-Christophe Mourrat}
\address[J.-C. Mourrat]{Courant Institute of Mathematical Sciences, New York University, New York, NY 10012 USA; CNRS, Ecole Normale Sup\'erieure de Lyon, Lyon, France}
\email{jean-christophe.mourrat@ens-lyon.fr}

\begin{abstract}
Sum-of-norms clustering is a convex optimization problem whose solution can be used for the clustering of multivariate data. We propose and study a localized version of this method, and show in particular that it can separate arbitrarily close balls in the stochastic ball model. More precisely, we prove a quantitative bound on the error incurred in the clustering of disjoint connected sets. Our bound is expressed in terms of the number of datapoints and the localization length of the functional.
\end{abstract}

\maketitle

\section{Introduction}\label{sec:intro}

\subsection{Context and informal description of main result}

Let $x_1,\ldots,x_{N}\in\mathbf{R}^{d}$ ($d\in\mathbf{N}$)
be a collection of points, which we think of as a dataset. We consider
the clustering problem, which is to find a partition of $\{x_1,\ldots,x_{N}\}$
that collects close-together points into the same element of the partition.
The problem of %
\emph{$K$-means clustering} is %
to identify a global minimizer of the functional
\begin{equation}
(y_1,\ldots,y_{N})\mapsto\frac{1}{N}\sum_{n=1}^{N}|y_{n}-x_{n}|^2,
\label{e.k-means}
\end{equation}
over all $(y_1,\ldots, y_N) \in (\R^d)^N$ such %
that the cardinality of the set $\{y_1,\ldots, y_N\}$ is at most~$K$. This minimization problem is known to be NP-hard in general, even when restricted to $K = 2$ \cite{aloise09} or $d = 2$ \cite{MNV09}. Practitioners typically resort to iterative search algorithms such as Lloyd's algorithm and its refinements \cite{lloyd1982least, vassilvitskii2006k}, which at least  identify local minimizers of \cref{e.k-means}. However, these methods are known to perform poorly in some cases, as will be discussed further below.

In this paper, we focus our attention on the ``sum-of-norms clustering''
method (also known as ``convex clustering shrinkage'' or ``Clusterpath'')
introduced in \cite{PDBSDM05,HVBJ11,LOL11}. This method can be thought of as a convex relaxation of the $K$-means problem. It considers the minimizer of the convex functional
\begin{equation}
(y_1,\ldots,y_{N})\mapsto\frac{1}{N}\sum_{n=1}^{N}|y_{n}-x_{n}|^2+\frac{\lambda}{N^2}\sum_{m,n=1}^{N}w(|x_{m}-x_{n}|)|y_{m}-y_{n}|\label{eq:discrete-functional-general-weight}
\end{equation}
over $(y_1,\ldots,y_{N})\in(\Rd)^N$,
for some nonincreasing ``weight function'' $w$. (Typical choices include constant and exponential weights.) Here $|\cdot|$ denotes the Euclidean norm on $\R^d$. The point $y_{n}$ is thought
of as a ``representative point'' of the cluster to which $x_{n}$
belongs, and thus $x_{n}$ and $x_{m}$ are declared to be members of the same cluster if
$y_{n}=y_{m}$. The first term of~\cref{eq:discrete-functional-general-weight} is designed
to keep the representative point of a cluster close to the points
in that cluster (and so encouraging having many clusters), while the
second term (called the ``fusion term'') is designed to encourage
points to merge into fewer clusters, at least if they are close together
according to the weight function. The parameter $\lambda$ controls
the relative strength of these two effects.

The present work investigates an asymptotic regime of sum-of-norms
clustering as the number of datapoints becomes very large and the
weight $w$ is simultaneously scaled in a careful way. In order to do so, it is useful to specify a more explicit model for the dataset. We assume that the datapoints $x_1, \ldots, x_N$ are independent and identically distributed. Their common law $\mu$, a probability measure on $\Rd$, is supported on the  union of disjoint closed sets $\bar U_1, \ldots, \bar U_L$. %
These sets are not known to the practitioner. We would like  $x_i$ and $x_j$ to be in the same cluster 
if and only if they lie in the same set $\bar U_\ell$ for some $\ell \in \{1,\ldots, L\}$, and so we seek a clustering algorithm that can guarantee this in the limit as $N\to \infty$. 

The weight function we choose is $w(r) := \gamma^{d+1} e^{-\gamma r}$, where $\gamma > 0$ is a parameter that can be tuned with the number of datapoints $N$. Roughly speaking, our main result states that, under modest assumptions, if we choose $\lambda$ above a critical threshold not depending on $N$, and also choose $\gamma \simeq N^{3/(4d)}$, then in some mean-square sense, each point $x_n \in \bar U_\ell$ will be associated with a ``representative point'' $y_n$ that is at distance of about $N^{-1/(8d)}$ from the centroid of the set $\bar U_\ell$ as $N\to\infty$. In particular, the clustering of the dataset is successful in the mean-square sense. The technical assumptions we need are that each set~$\bar U_\ell$ is ``effectively'' star-shaped (see Definition~\ref{def:effectivelystarshaped} below), that the measure $\mu$ has a density with respect to the Lebesgue measure, and that this density is Lipschitz and bounded away from zero on its support. As an illustration, we can take $\mu$ to be the uniform measure on the union of the sets depicted in \cref{fig:clusters} below. The condition that the clusters be effectively star-shaped is a nontrivial geometric restriction, although it does not seem to be fundamental. See \cref{rem:starshapedthing} below for a weaker but more complicated sufficient condition, and further discussion.

Our result applies in particular to the case in which $\mu$ is the uniform measure on the union of disjoint balls. One of the strengths of our result is that these balls, or more generally the sets $\bar U_1, \ldots, \bar U_L$, can be chosen arbitrarily close to one another, as long as they do not touch. (However, we expect that the required number of datapoints $N$ will grow as the balls are brought closer to each other.) Another important feature is that we allow for sets $\bar U_1,\ldots, \bar U_L$ that may be non-convex, as long as they are effectively star-shaped. Moreover, our result covers situations in which the convex hulls of the clusters intersect. 

The unweighted version of the sum-of-norms clustering method, i.e.\ the case $w \equiv 1$, does not share any of these features. Indeed, the unweighted method fails to recover the clusters of datapoints sampled independently from two disjoint balls if
the balls are too close together, as we showed in~\cite{DM21}. Moreover, the unweighted algorithm must output clusters that are contained in disjoint balls (see \cite[Theorem~3]{NM21} or \cite[Proposition~1.8]{DM21}), and in particular, it cannot separate two clusters unless their convex hulls are disjoint. 

Popular alternative clustering methods such as Lloyd's algorithm and its refinements \cite{lloyd1982least, vassilvitskii2006k} are also known to have important limitations. In \cite[Appendix E]{awasthi2015relax}, the authors exhibit explicit examples of configurations of disjoint balls $\bar U_1, \ldots, \bar U_L$ of equal radius such that if  the measure $\mu$ is the uniform probability measure on the union of these balls, then the probability that Lloyd's algorithm successfully clusters the dataset is at most $(1-\frac 2 9)^{L/3}$. They also construct similar examples for which a refined method called ``kmeans++'' also fails to successfully cluster the dataset with a probability that can be made arbitrarily close to $1$. 

Other convex relaxations of the $K$-means problem have been explored, but we are not aware of theoretical guarantees that would cover the case in which two clusters can be taken arbitrarily close to one another. Possibly the simplest way to ask the question is to consider the ``stochastic ball model'' \cite{NW15}: we assume that the datapoints are sampled independently according to the uniform measure on the union of two disjoint balls of unit radius. In this setting, the method explored in \cite{awasthi2015relax} is guaranteed to recover the clusters provided that the distance between the two ball centers is above $2 \sqrt{2} (1 + d^{-1/2})$. (See also \cite{del2021k} for the related problem of $K$-medians clustering.) Another convex relaxation of $K$-means clustering is explored in \cite{de2020ratio}: for the stochastic ball model, that method successfully clusters the dataset provided that the distance between the centers of the balls is above $1 + \sqrt{3}$. 

Several earlier works have explored the theoretical properties of sum-of-norms clustering. The unweighted method ($w \equiv 1$) was shown to separate cube-shaped clusters provided that they are sufficiently far away in \cite{ZXLY14}; for the case of two cubes of side-length $2$ and equal number of datapoints falling in each cube, the criterion requires that the minimal distance between two points in each cube be at least $6 \sqrt{d}$. More general conditions are derived in~\cite{PDJB17} (see in particular part~2 of Theorem~1) that imply the successful recovery of the clusters for the stochastic ball model if the distance between the ball centers is larger than $4$.  %
These results were refined and extended to the case of arbitrary weights in \cite{STY21}. The problem of separating mixtures of Gaussian random variables has been considered in \cite{TW15, PDJB17, JVZ19}, and algorithmic aspects were explored in \cite{PDBSDM05, HVBJ11, chi2015splitting, CGR17, JV20}. Several works have stressed the apparent advantages of using non-constant weights in sum-of-norms clustering \cite{HVBJ11, chi2015splitting, CGR17, NM21}.

\subsection{Precise statement and proof strategy}

Following our
previous work \cite{DM21}, for the purposes of mathematical analysis
we consider the somewhat more general problem of clustering of measures.
For a Borel measure $\mu$ on $\mathbf{R}^{d}$ of compact support, we abbreviate  $L^2(\mu) \coloneqq L^2(\R^d,\mu;\R)$ and $(L^2(\mu))^d \simeq L^2(\R^d,\mu; \R^d)$ to denote the Lebesgue spaces of $\mu$-square-integrable functions from $\R^d$ to $\R$ and $\R^d$ to $\R^d$ respectively. (We recall that these spaces identify functions that only disagree on a set of $\mu$-measure zero.) We define the functional $J_{\mu,\lambda,\gamma}:(L^2(\mu))^d\to\mathbf{R}$ by
\begin{equation}
J_{\mu,\lambda,\gamma}(u)\coloneqq\int|u(x)-x|^2\,\dif\mu(x)+\lambda\gamma^{d+1}\iint\e^{-\gamma|x-y|}|u(x)-u(y)|\,\dif\mu(x)\,\dif\mu(y).\label{eq:Jdef}
\end{equation}
We note that \cref{eq:discrete-functional-general-weight} with $w(r)=\gamma^{d+1}\e^{-\gamma r}$ is obtained from \cref{eq:Jdef} by setting $\mu=\frac{1}{N}\sum_{n=1}^{N}\delta_{x_{n}}$. The map $x\mapsto u(x)$ is then the analogue of the map $x_n\mapsto y_n$ from points to cluster representative points.
We denote by $u_{\mu,\lambda,\gamma}$ the minimizer of $J_{\mu,\lambda,\gamma}$,
which exists and is unique because $J_{\mu,\lambda,\gamma}$ is coercive, uniformly
convex, and continuous on $(L^2(\mu))^d$. (See \cref{eq:uniformconvexity} below.) For every Borel set $U$ such that $\mu(U) > 0$, we let
\[\centroid{\mu}(U)\coloneqq \frac{1}{\mu(U)}\int_U x\,\dif \mu(x)\] be the $\mu$-centroid of $U$. We also write $a \vee b := \max(a,b)$, and define
\begin{equation}  
\label{e.def.dprime}
d' := 
\begin{cases}  %
\infty & \text{if } d = 1, \\
\frac 4 3 & \text{if } d = 2, \\
d & \text{if } d \ge 3.
\end{cases}
\end{equation}

Our main result considers a measure $\mu$ with support comprising a finite union of connected components, each with sufficiently regular boundary and satisfying a quantitative version of a ``star-shaped'' property. We also assume that $\mu$ is bounded below on its support, and is sufficiently regular on its support. We draw $N$ datapoints independently from $\mu$ and run our clustering algorithm on these datapoints. If $\gamma$ is chosen appropriately large depending on $N$, and $\lambda$ is fixed sufficiently large independent of $N$, then our clustering algorithm will recover the connected components of $\supp\mu$. Before stating our main result, we introduce the technical condition we need on the components of $\supp\mu$. %

\begin{defn}
\label{def:effectivelystarshaped}For $U$ a subset of $\mathbf{R}^{d}$
and $\eps>0$, let $U_{\eps}$ be the $\eps$-enlargement of $U$,
namely
\[
U_{\eps}\coloneqq\{x\in\mathbf{R}^{d}\mid\dist(x,U)\le\eps\}.
\]
We say that a domain $U$ is \emph{effectively star-shaped} if
there exists $x_{*}\in U$ and a constant $C_{*}<\infty$ such that
for every $\eps>0$ sufficiently small, the image of $U_{\eps}$ under
the mapping $x\mapsto x_{*}+(1-C_{*}\eps)(x-x_{*})$ is contained
in $U$.
\end{defn}
For example, any convex open set is effectively star-shaped (in which case $x_*$ can be chosen arbitrarily in the interior). Any effectively star-shaped set is star-shaped. An example of a set that is star-shaped but not effectively star-shaped is illustrated in \cref{fig:starshapedthing}. Now we can state our main theorem.
\begin{figure}
    \centering
    \begin{tikzpicture}[scale=2]
      \fill[color=lightgray] (1,0) -- (2,0) arc (0:-180:1) -- (-1,0) arc (180:0:1);
    \end{tikzpicture}
    \caption{A set that is star-shaped but not effectively star-shaped.}
    \label{fig:starshapedthing}
\end{figure}
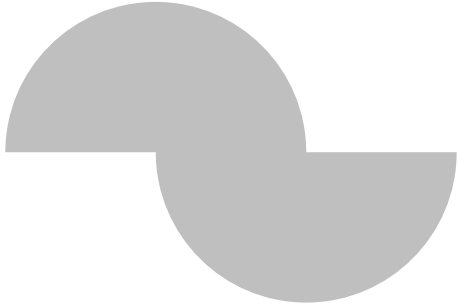

\begin{figure}
    \centering
    \includegraphics[scale=0.6]{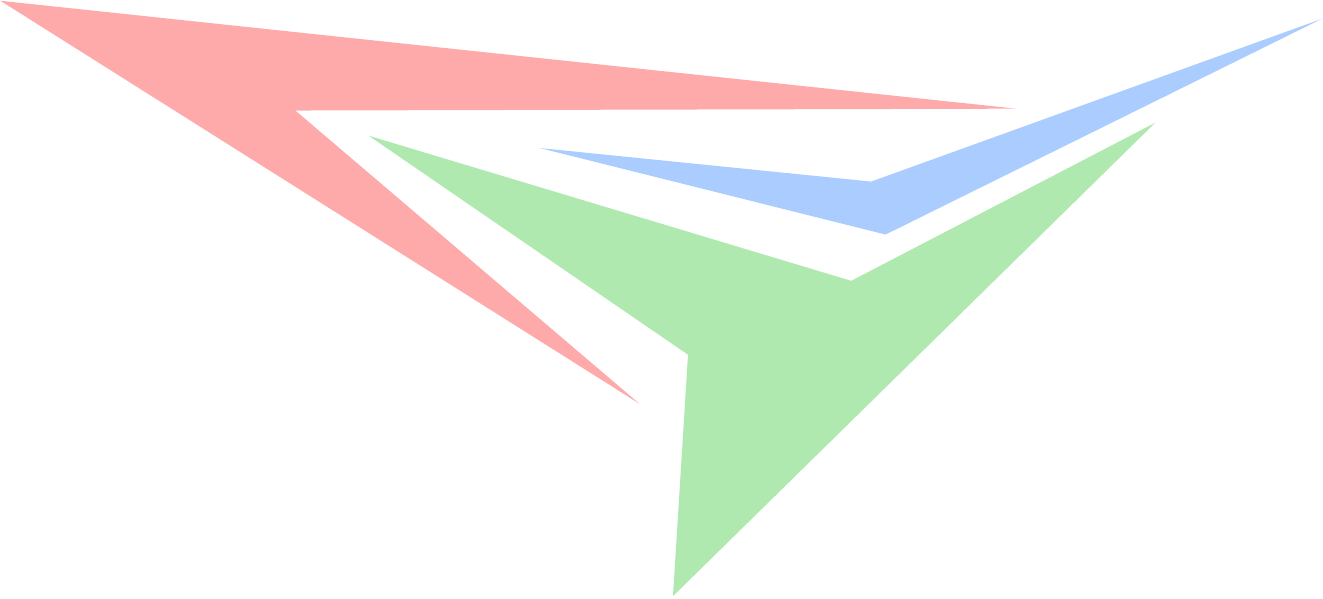}
    \caption{A set of three open sets $U_1,U_2,U_3$ satisfying the hypotheses of \cref{thm:maintheorem}.}
    \label{fig:clusters}
\end{figure}

\begin{thm}
\label{thm:maintheorem}Let $\mu$ be a probability measure on $\mathbf{R}^{d}$
such that $\supp\mu=\bigcup_{\ell=1}^{L}\overline{U_\ell}$, where $U_1,\ldots,U_L$
are bounded, effectively star-shaped %
open sets with Lipschitz boundaries, such that their closures $\overline{U_1},\ldots,\overline{U_L}$
are pairwise disjoint. Assume that~$\mu$ admits a density with respect
to the Lebesgue measure, and that this density is Lipschitz and bounded away from zero on $\supp\mu$. Then there exist $\lambda_{\mathrm{c}}, C<\infty$ such that for every $\lambda \ge \lambda_{\mathrm{c}}$, the following holds.
Let $(X_{n})_{n\in\mathbf{N}}$ be a sequence of independent random variables
with law~$\mu$, $N \ge 1$ be an integer, $\mu_{N}\coloneqq\frac{1}{N}\sum_{n=1}^{N}\delta_{X_{n}}$ be the empirical measure of the datapoints, and
\[
A_N^{(\ell)}\coloneqq\{n\in\{1,\ldots,N\}\mid X_{n}\in U_\ell\},\qquad \ell \in\{1,\ldots,L\}
\]
be the set of indices of datapoints in $U_\ell$.
For every $\ga \ge 1$, the mean-square error between the clustering algorithm and the centroids of the clusters is bounded as follows:
\begin{equation}
\begin{aligned}
\mathbf{E}&\left[\frac{1}{N}\sum_{\ell=1}^L\sum_{n\in A_{N}^{(\ell)}}|u_{\mu_{N},\lambda,\gamma}(X_{n})-\centroid{\mu}(U_\ell)|^2\right]\\%\right]
&\qquad\le C\left(\gamma N^{-1/(d\vee 2)}(\log N)^{1/d'}+(1+\lambda)\gamma^{-1/3}\right).\label{eq:thm1statement}
\end{aligned}
\end{equation}
\end{thm}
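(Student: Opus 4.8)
The plan is to split the error into a deterministic \emph{continuum bias} and a \emph{sampling fluctuation}, treating the former by a comparison/Poincar\'e argument that exploits the localization of the kernel and the geometry of the components, and the latter by uniform convexity together with an empirical-process estimate. Concretely, set $u^\star:=\sum_{\ell=1}^{L}\centroid{\mu}(U_\ell)\,\1_{U_\ell}$. Since each $U_\ell$ is open with Lipschitz (hence Lebesgue-null) boundary and $\mu$ has a density, $\mu\bigl(\supp\mu\setminus\bigcup_\ell U_\ell\bigr)=0$, so $u^\star$ is defined $\mu$-a.e.\ and $\mu_N$-a.e.\ almost surely, and the bracketed quantity in \cref{eq:thm1statement} equals $\int|u_{\mu_N,\lambda,\gamma}-u^\star|^2\,\dif\mu_N$. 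I would bound this by
\[
2\int|u_{\mu_N,\lambda,\gamma}-u_{\mu,\lambda,\gamma}|^2\,\dif\mu_N+2\int|u_{\mu,\lambda,\gamma}-u^\star|^2\,\dif\mu_N,
\]
and take expectations. Recording first (from the Euler--Lagrange equation) that $u_{\mu,\lambda,\gamma}$ has a Lipschitz representative --- needed both to evaluate it at the $X_n$ and to justify $\mathbf{E}\!\int f\,\dif\mu_N=\int f\,\dif\mu$ --- the second term contributes $2\|u_{\mu,\lambda,\gamma}-u^\star\|_{L^2(\mu)}^2$, so the targets become $\|u_{\mu,\lambda,\gamma}-u^\star\|_{L^2(\mu)}^2\le C(1+\lambda)\gamma^{-1/3}$ and $\mathbf{E}\!\int|u_{\mu_N,\lambda,\gamma}-u_{\mu,\lambda,\gamma}|^2\,\dif\mu_N\le C\gamma N^{-1/(d\vee2)}(\log N)^{1/d'}$.

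For the continuum bias, let $g_0:=\min_{\ell\ne\ell'}\dist(\ov U_\ell,\ov U_{\ell'})>0$. The fusion term of $J_{\mu,\lambda,\gamma}(u^\star)$ is at most $C\lambda\gamma^{d+1}\e^{-\gamma g_0}$, so $J_{\mu,\lambda,\gamma}(u^\star)\le Q_0+C\lambda\gamma^{d+1}\e^{-\gamma g_0}$ with $Q_0:=\sum_\ell\mu(U_\ell)\Var_{\mu|_{U_\ell}}$, and minimality forces the fusion term of $u_{\mu,\lambda,\gamma}$ to be $\le Q_0/\lambda+C\gamma^{d+1}\e^{-\gamma g_0}$. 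Testing the Euler--Lagrange equation against $e\,\1_{U_\ell}$ and using that the subgradient kernel is antisymmetric, the cross-component contribution is again exponentially small in $\gamma$, so the $\mu|_{U_\ell}$-mean of $u_{\mu,\lambda,\gamma}$ equals $\centroid{\mu}(U_\ell)$ up to an exponentially small error; hence it suffices to bound $\sum_\ell\inf_{c\in\Rd}\|u_{\mu,\lambda,\gamma}-c\|_{L^2(\mu|_{U_\ell})}^2$. The crux is a quantitative estimate, on a single effectively star-shaped component (see \cref{def:effectivelystarshaped}), that smallness of the $\e^{-\gamma|x-y|}$-weighted fusion energy forces $L^2(\mu)$-proximity to a constant: one compares $J_{\mu,\lambda,\gamma}$ with its $\gamma\to\infty$ total-variation-type limit functional --- whose minimizer, for $\lambda\ge\lambda_{\mathrm c}$, is exactly the centroid --- by Taylor-expanding $|u(x)-u(y)|$ and controlling the boundary layer of width $\sim1/\gamma$ where oscillations are underpenalized, then invokes a Poincar\'e inequality valid on effectively star-shaped domains (chaining cell averages along star-rays, the two-sided density bound comparing $L^2(\mu)$ with Lebesgue measure) to turn the resulting energy gap into an $L^2$ bound. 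Balancing the length scales entering this comparison produces the exponent $\gamma^{-1/3}$ and the linear dependence on $\lambda$.

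For the sampling fluctuation, uniform convexity \cref{eq:uniformconvexity} gives
\[
\int|u_{\mu_N,\lambda,\gamma}-u_{\mu,\lambda,\gamma}|^2\,\dif\mu_N\le J_{\mu_N,\lambda,\gamma}(u_{\mu,\lambda,\gamma})-J_{\mu_N,\lambda,\gamma}(u_{\mu_N,\lambda,\gamma}),
\]
and, adding and subtracting $J_{\mu,\lambda,\gamma}$ and using $J_{\mu,\lambda,\gamma}(u_{\mu,\lambda,\gamma})\le J_{\mu,\lambda,\gamma}(\td u_{\mu_N,\lambda,\gamma})$ for a Kirszbraun extension $\td u_{\mu_N,\lambda,\gamma}$ of the discrete minimizer (same Lipschitz constant), the right-hand side is at most $|J_{\mu_N,\lambda,\gamma}(u_{\mu,\lambda,\gamma})-J_{\mu,\lambda,\gamma}(u_{\mu,\lambda,\gamma})|+|J_{\mu,\lambda,\gamma}(\td u_{\mu_N,\lambda,\gamma})-J_{\mu_N,\lambda,\gamma}(\td u_{\mu_N,\lambda,\gamma})|$. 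An a priori regularity bound places both competitors in a fixed class $\mathcal G$ of maps that are $L(\gamma,\lambda)$-Lipschitz (with $L$ polynomial in $\gamma,\lambda$) and have range in a fixed ball, so it remains to estimate $\mathbf{E}\sup_{v\in\mathcal G}|J_{\mu_N,\lambda,\gamma}(v)-J_{\mu,\lambda,\gamma}(v)|$. This splits into a linear term $\mathbf{E}\sup_{v\in\mathcal G}\bigl|\int|v(x)-x|^2(\dif\mu_N-\dif\mu)\bigr|$ and a $U$-statistic term $\lambda\gamma^{d+1}\,\mathbf{E}\sup_{v\in\mathcal G}\bigl|\iint\e^{-\gamma|x-y|}|v(x)-v(y)|(\dif\mu_N\,\dif\mu_N-\dif\mu\,\dif\mu)\bigr|$; decoupling reduces the latter to a linear empirical process over the Lipschitz-controlled family $x\mapsto\int\e^{-\gamma|x-y|}|v(x)-v(y)|\,\dif\mu(y)$. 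A chaining estimate over $\mathcal G$ at the kernel scale $1/\gamma$ --- equivalently, uniform Wasserstein/discrepancy bounds for $\mu_N-\mu$: the plain rate $N^{-1/d}(\log N)^{1/d}$ for $d\ge3$, the Ajtai--Koml\'os--Tusn\'ady-type rate $N^{-1/2}(\log N)^{3/4}$ for $d=2$, and a sharper order-statistics bound $N^{-1/2}$ for $d=1$ --- then yields the claimed dependence on $N$, with the prefactor $\gamma$ absorbing the Lipschitz constants and the $\gamma^{d+1}$ after the cancellations in the above differences.

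The main obstacle is the continuum bias with the sharp exponent: producing, on merely effectively star-shaped domains with Lipschitz boundary, an explicit $\gamma^{-1/3}$ rate for converting smallness of the nonlocal, $L^1$-type fusion energy into $L^2(\mu)$-closeness to piecewise constants. The difficulty is that a priori one controls neither the smoothness nor the total variation of the minimizer, so the nonlocal-to-local comparison must be routed through a regularization at an optimized scale and balanced against the loss in the boundary layer where oscillations are cheap --- which is precisely where the geometric hypotheses (effective star-shapedness, Lipschitz boundary) and the two-sided density bound are used. The sampling step is conceptually standard but requires care to land on the stated logarithmic power and the linear-in-$\gamma$ prefactor rather than a larger one.
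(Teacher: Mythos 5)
Your bias--variance split (continuum bias $\|u_{\mu,\lambda,\gamma}-u^\star\|_{L^2(\mu)}^2$ plus sampling fluctuation $\mathbf{E}\int|u_{\mu_N,\lambda,\gamma}-u_{\mu,\lambda,\gamma}|^2\,\dif\mu_N$) has the same overall shape as the paper's argument, and your treatment of the bias term --- compare $J_{\mu,\lambda,\gamma}$ with a total-variation-type limit whose minimizer is the piecewise-constant centroid map for $\lambda\ge\lambda_{\mathrm c}$, quantify the $\Gamma$-convergence via mollification and a star-ray rescaling, and balance scales to get $\gamma^{-1/3}$ --- is in essence what the paper does in \cref{thm:convgamma} and \cref{prop:limitinggivescentroids}. (One cosmetic point: the ``Poincar\'e inequality'' you invoke at the end of that step is not needed; the quantitative $\Gamma$-convergence bound on $J_{\mu,\lambda,\infty}(u_{\gamma,\ep})-\inf J_{\mu,\lambda,\infty}$ already yields the $L^2$ estimate directly via the uniform convexity inequality \cref{eq:applyuniformconvexity}.)

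The sampling step is where the approaches genuinely diverge, and where your route has an unresolved gap. You need both competitors to lie in a fixed class $\mathcal G$ of $L(\gamma,\lambda)$-Lipschitz maps, in particular that the discrete minimizer $n\mapsto u_{\mu_N,\lambda,\gamma}(X_n)$ admits a controlled Lipschitz (hence Kirszbraun-extendable) representative with a constant that you can track through the chaining. This is not established anywhere, and it is not clear that it holds: for the \emph{weighted} fusion penalty with a sharply decaying kernel, the usual non-expansiveness arguments for unweighted sum-of-norms clustering do not carry over, and the minimizer on an empirical measure can in principle oscillate on the kernel scale $1/\gamma$ near cluster boundaries. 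Without this a priori regularity, both the Kirszbraun extension step and the chaining entropy estimate are unjustified, and the claim that the resulting bound lands exactly on $\gamma N^{-1/(d\vee 2)}(\log N)^{1/d'}$ (with only a linear-in-$\gamma$ prefactor after ``absorbing the Lipschitz constants'') is not verified. The paper sidesteps this entirely: \cref{prop:Winftystability} proves an $\mathcal{W}_\infty$-stability estimate for the minimizers that requires no pointwise regularity --- it is a purely variational argument comparing $J_\mu(\bar u)$ to $\inf J_{\tilde\mu}$ through a transport plan --- and then feeds in the $\mathcal{W}_\infty$ rate of \cref{prop:glivenkocantelliwinfty}. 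That route also forces an extra ingredient you do not address: \cref{prop:glivenkocantelliwinfty} requires a \emph{connected} domain, so the paper truncates the kernel (\cref{sec:truncation}) to decouple the components and applies the Wasserstein bound component by component. Your empirical-process framing avoids the connectedness issue, but only at the cost of the Lipschitz hypothesis; you would still need truncation (or some substitute, such as the Euler--Lagrange cross-term argument you gesture at) to reduce the continuum bias to a per-component statement, and that reduction is left at the sketch level. In short: the skeleton is right and the bias half is sound, but the fluctuation half as written rests on an unproven and doubtful regularity assertion that the paper's $\mathcal{W}_\infty$-stability lemma is specifically designed to avoid.
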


For $d \ge 2$, optimizing the right-hand side of \cref{eq:thm1statement} suggests the optimal choice $\gamma \simeq N^{3/(4d)}$, in which case the mean-square error is at most of the order of $N^{-1/(4d)}$, up to logarithmic corrections. We do not know if the estimate in \cref{eq:thm1statement} is sharp. If technical issues that arise near the boundary of the domains could be avoided, then we believe that we could replace the term $\gamma^{-1/3}$ in \cref{eq:thm1statement} by $\gamma^{-1/2}$; this in turn would suggest choosing $\gamma \simeq N^{2/(3d)}$, up to a logarithmic correction.

A similar result to \cref{thm:maintheorem} can be obtained if the weight $r\mapsto \e^{-\gamma r}$ is replaced by a truncated version $r\mapsto \e^{-\gamma r}\mathbf{1}_{r\le \omega}$ for an appropriate choice of $\omega$; see \cref{prop:truncate} below. This result essentially says that we can choose $\omega \simeq \ga^{-1}$, up to a logarithmic correction, without modifying the optimizer substantially. In the discrete setting, this reduces the number of pairs of points that need to be included in the sum that is the double integral in~\cref{eq:Jdef}, and thus may lead to improvements in computational efficiency. (See \cite{chi2015splitting} regarding efficient computational algorithms for sum-of-norms clustering, and in particular regarding the effect of the sparsity of the weights on the computational complexity.)
For instance, under the assumptions of \cref{thm:maintheorem} and with the choice of $\omega \simeq \ga^{-1} \simeq N^{-3/(4d)}$, a typical point only interacts with about $N^{1/4}$ points in its vicinity. Depending on the relative costs of computation versus the procurement of new datapoints, efficiency considerations may lead to a different choice of $\gamma$ than what would be suggested by the optimal accuracy considerations discussed in the previous paragraph. We do not further pursue the question of computational efficiency in the present paper.

While we did not keep track of this explicitly, one can check from the proof that the critical value $\lambda_c < \infty$ identified in \Cref{thm:maintheorem} does not change as the sets $\bar U_1, \ldots, \bar U_L$ are individually translated or rotated, provided that they remain pairwise disjoint. In particular, this constant does not depend on the minimal distance between the different data clusters. As a careful examination of the arguments below shows, one can also choose the constant $C < \infty$ in \Cref{thm:maintheorem} to be invariant under individual translations and rotations of the sets $\bar U_1, \ldots, \bar U_L$ that do not make them intersect each other, provided that we also require $\gamma$ to be sufficiently large. Roughly speaking, we would then require $\ga^{-1}$ to be larger than the minimal distance separating any pair of clusters, that is,
\begin{equation*}  %
\ga^{-1} \gtrsim \Delta := \min_{1 \le \ell \neq \ell'\le L}\dist(U_\ell,U_{\ell'}).
\end{equation*}
The precise condition is displayed in \eqref{eq:mindist} below. In particular, for $d \ge 2$, our approach would yield non-trivial information provided that the number of datapoints $N$ is much larger than $\Delta^{-d}$.

An important step in the proof of \cref{thm:maintheorem}, which is
also of independent interest, concerns the behavior of the functional $J_{u,\lambda,\gamma}$ as 
$\gamma$ is taken to infinity. The factor $\gamma^{d+1}$
in~\cref{eq:Jdef} was chosen so that $J_{u,\lambda,\gamma}$ would converge to a limiting functional as $\gamma\to\infty$, %
under appropriate conditions on $\mu$. Let~$U$ be a bounded open subset of $\mathbf{R}^{d}$ and suppose
that $\supp\mu=\overline{U}$. Suppose furthermore that $\mu$ is
absolutely continuous with respect to the Lebesgue measure on $\overline{U}$,
with density $\rho\in\mathcal{C}(\overline{U})$ bounded away from zero on $\bar U$. We denote by $\BV(U)$ the space of functions of bounded variation on $U$. (Some elementary properties of the space $\BV(U)$ are recalled in Section~\ref{sec:basicproperties} below; see also \cite{afpbook}.) If $u\in (L^2(U)\cap\BV(U))^d$, then we can define
\begin{equation}
J_{\mu,\lambda,\infty}(u)\coloneqq\int|u(x)-x|^2\,\dif\mu(x)+c\lambda\int\rho(x)^2\,\dif|Du|(x),\label{eq:Jinftydef}
\end{equation}
where 
\begin{equation}
c\coloneqq\int_{\mathbf{R}^{d}}\e^{-|y|}|y\cdot\mathrm{e}_1|\,\dif y.\label{eq:cdef}
\end{equation}
We will see in \cref{prop:gradfnal-unique-minimizer} below that $J_{\mu,\lambda,\infty}$ admits a unique minimizer $u_{\mu,\lambda,\infty}\in (L^2(U)\cap\BV(U))^d$. In \Cref{thm:convgamma}, we will then show in a quantitative sense that, if $U$ is sufficiently regular and the density $\rho$ is Lipschitz, then $u_{\mu,\la,\ga}$ converges to $u_{\mu,\la,\infty}$ as $\ga$ tends to infinity. The essential strategy here is to compare the functionals  $J_{\mu,\lambda,\infty}$ and $J_{\mu,\lambda,\gamma}$ and use their uniform convexity. An important technical complication is that $J_{\mu,\lambda,\infty}(u)$ is only defined for functions $u$ of bounded variation on $\overline{U}$ while the minimizer of $J_{\mu,\lambda,\gamma}$ may not (a priori) be of bounded variation. Therefore, to compare the functionals, we must first smooth their argument $u$ in a way that respects derivatives. Convolution by a smooth function works, but we first must dilate $u$ slightly since it is only defined on $\overline{U}$, not all of~$\R^d$. Moreover, this modification of the optimizer for $J_{\mu,\lambda,\gamma}$ needs to be performed in such a way that the functional does not increase too much.
It is this constraint that leads us to the requirement that the domains be effectively star-shaped (or that the more general condition in \cref{rem:starshapedthing} holds). %

The utility of the gradient functional \cref{eq:Jinftydef} in the proof of \cref{thm:maintheorem}
is apparent in \cref{prop:limitinggivescentroids} below. This proposition states that when $\lambda$ is large enough, the minimizer of the gradient functional recovers the centroids of the connected components
of the support of the measure $\mu$. The critical $\lambda$ is identified in terms of the $L^\infty$ norm of the solution to a PDE arising from the first-order conditions for the minimizer. We expect that further information about the behavior of the limiting functional could be obtained by further studying this PDE.

As mentioned, the gradient clustering functional \cref{eq:Jinftydef} only makes sense
for smooth measures. In order to show the convergence of the minimizers
of the weighted clustering functionals \cref{eq:Jdef} on empirical
distributions, we need to relate the minimizers of the finite-$\gamma$
problem for empirical distributions to the minimizers of the finite-$\gamma$ problem
for smooth distributions. We do this by proving a stability result
with respect to the $\infty$-Wasserstein metric $\mathcal{W}_{\infty}$,
which is \cref{prop:Winftystability} below. This works in combination
with a quantitative Glivenko--Cantelli-type result for the $\infty$-Wasserstein
metric proved in \cite{GS15}, and recalled in \cref{prop:glivenkocantelliwinfty} below.
However, since the latter result only holds for connected domains, we also need to
truncate the exponential weight in \cref{eq:Jdef}, which is done in \cref{sec:truncation}.

\subsection{Outline of the paper}
In \cref{sec:basicproperties} we establish some basic properties of $J_{\mu,\lambda,\gamma}$ and $J_{\mu,\lambda,\infty}$. In \cref{sec:Winftystab} we prove a stability result for $u_{\tilde{\mu},\lambda,\gamma}$ as $\tilde\mu\to\mu$ in the $\infty$-Wasserstein distance. In \cref{sec:convasgammatoinfty} we prove the convergence result for $u_{\mu,\lambda,\gamma}$ as $\gamma\to\infty$. In \cref{sec:limitingfnal} we show that the limiting functional $u_{{\mu},\lambda,\infty}$ recovers the centroids of the connected components of $\supp\mu$ as long as $\lambda$ is large enough. In \cref{sec:truncation} we prove a stability result when the exponential weight is truncated. In \cref{sec:mainthmproof} we put everything together to prove \cref{thm:maintheorem}.

\subsection*{Acknowledgments}
AD was partially supported by the NSF Mathematical Sciences Postdoctoral Fellowship program under grant no.\ DMS-2002118. JCM was partially supported by NSF grant DMS-1954357.

\section{Basic properties of the functionals}\label{sec:basicproperties}

As mentioned above, for a bounded open set $U \subset \Rd$, we denote by $\BV(U)$ the space of functions of bounded variation on $U$. This is the set of all functions $u \in L^1(U)$ whose derivatives are Radon measures. For $u \in \BV(U)$, we denote by $Du$ the gradient of $u$, which is thus a vector-valued Radon measure, and we denote by $|Du|$ its total variation. In particular, for every open set $V \subset U$, we have by \cite[Proposition~1.47]{afpbook} that
\begin{equation}  
\label{e.total.var}
|Du|(V) = \sup_\phi \int_V \phi \cdot \, \d D u  = \sup_\phi \sum_{i = 1}^d \int_V \phi_i \, \d D_i u,
\end{equation}
where the supremum is over all $\phi \in (\mathcal{C}_\mathrm{c}(V))^d$ (the space of $\R^d$-valued continuous functions supported on compact subsets of $V$) such that $\|\phi\|_{L^\infty(V)} \le 1$, with the understanding that 
\begin{equation*}  %
\|\phi\|_{L^\infty(V)} = \| \, |\phi| \, \|_{L^\infty(V)} = \esssup_{x \in V} \Ll( \sum_{i = 1}^d \phi_i^2(x)  \Rr) ^\frac 1 2.
\end{equation*}
When $u \in (\BV(U))^d$, the gradient $Du$ is a Radon measure taking values in the space of $d\times d$ matrices. Identifying such a matrix with an element of $\R^{d^2}$, %
we can still define the total variation measure $|Du|$ as above. (Thus, if $Du$ is in fact an $\R^{d\times d}$-valued function, then $|Du|(x)$ is the Frobenius norm of the matrix $Du(x)$.) We refer to \cite{afpbook} for a thorough exposition of the properties of $\BV$ functions.

In the remainder of this section, we collect some basic properties of the functionals
$J_{\mu,\lambda,\gamma}$. It is straightforward to see that, for
any $\gamma\in(0,\infty)$, the functional $J_{\mu,\lambda,\gamma}$ is uniformly
convex on $(L^2(\mu))^d$. Indeed, for every $u,v \in (L^2(\mu))^d$, we have
\begin{equation}
\frac12\left(J_{\mu,\lambda,\gamma}(u+v)+J_{\mu,\lambda,\gamma}(u-v)\right)-J_{\mu,\lambda,\gamma}(u)\ge\int v^2\,\dif\mu.
\label{eq:uniformconvexity}
\end{equation}
Since the functional is also coercive, the existence and uniqueness of the minimizer $u_{\mu,\lambda,\ga}$ follow. The next proposition covers the case when $\ga = \infty$. 
\begin{prop}
\label{prop:gradfnal-unique-minimizer}Let $U$ be a bounded open subset
of $\mathbf{R}^{d}$ and suppose that $\supp\mu=\overline{U}$. Suppose
furthermore that $\mu$ is absolutely continuous with respect to the
Lebesgue measure on $\overline{U}$ with a density $\rho\in\mathcal{C}(\overline{U})$ that is bounded away from zero on $\bar U$.
Then for any $\lambda>0$, the functional $J_{\mu,\lambda,\infty}$
admits a unique minimizer $u_{\mu,\lambda,\infty}\in (L^2(U)\cap\BV(U))^d$.
\end{prop}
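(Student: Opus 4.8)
The plan is to obtain uniqueness from strict (indeed uniform) convexity, and existence from the direct method of the calculus of variations. For uniqueness, first note that the verbatim analogue of \cref{eq:uniformconvexity} holds with $\ga$ replaced by $\infty$: for $u,v\in(L^2(U)\cap\BV(U))^d$, the parallelogram identity handles the fidelity term (which is the same for all $\ga$), while the triangle inequality $|D(u+v)|+|D(u-v)|\ge 2|Du|$ for total variation measures, combined with $\rho^2\ge 0$, gives $\tfrac12\int\rho^2\,\d|D(u+v)|+\tfrac12\int\rho^2\,\d|D(u-v)|\ge\int\rho^2\,\d|Du|$; adding the two estimates yields $\tfrac12\bigl(J_{\mu,\la,\infty}(u+v)+J_{\mu,\la,\infty}(u-v)\bigr)-J_{\mu,\la,\infty}(u)\ge\int v^2\,\d\mu$. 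Since $\rho$ is continuous, hence bounded, and bounded away from zero on the compact set $\ov U$, the measure $\mu$ and $\mathcal L^d$ restricted to $U$ are mutually absolutely continuous, so $\int v^2\,\d\mu=0$ forces $v=0$ in $L^2(U)$, hence in $\BV(U)$. Thus $J_{\mu,\la,\infty}$ has at most one minimizer.

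For existence, let $(u_n)$ be a minimizing sequence; after passing to a subsequence we may assume $J_{\mu,\la,\infty}(u_n)\to\inf J_{\mu,\la,\infty}$, and comparing with the constant competitor $u\equiv 0$, for which the total variation term vanishes and the fidelity term $\int|x|^2\,\d\mu(x)$ is finite since $\ov U$ is bounded and $\mu$ is finite, we see that $M\coloneqq\sup_n J_{\mu,\la,\infty}(u_n)<\infty$. The bound on the fidelity term and the triangle inequality in $(L^2(\mu))^d$ give $\|u_n\|_{(L^2(\mu))^d}\le M^{1/2}+\sup_{x\in\ov U}|x|$, so, $\rho$ being bounded below and $U$ bounded, $(u_n)$ is bounded in $(L^1(U))^d$; the bound on the total variation term and $\rho^2\ge(\inf_{\ov U}\rho)^2>0$ give $\sup_n|Du_n|(U)<\infty$. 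Hence $(u_n)$ is bounded in $(\BV(U))^d$. Exhausting $U$ by open sets $V_k\uparrow U$ with $\ov{V_k}\subset U$ for which the compact embedding $\BV(V_k)\hookrightarrow L^1(V_k)$ holds, and extracting a further diagonal subsequence (not relabeled), we obtain $u_n\to u$ in $(L^1_{\mathrm{loc}}(U))^d$ and $\mathcal L^d$-almost everywhere on $U$. Lower semicontinuity of the total variation on each $V_k$ (which in particular makes $Du$ a Radon measure), continuity from below of $|Du|$, and Fatou's lemma for the $L^1$-norm give $u\in(\BV(U))^d$; Fatou's lemma applied to $\int|u_n(x)-x|^2\,\d\mu(x)$ (using $\mathcal L^d$-a.e.\ convergence and mutual absolute continuity) gives $u\in(L^2(\mu))^d=(L^2(U))^d$. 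Thus $u\in(L^2(U)\cap\BV(U))^d$.

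It remains to verify $J_{\mu,\la,\infty}(u)\le\liminf_n J_{\mu,\la,\infty}(u_n)=\inf J_{\mu,\la,\infty}$; by superadditivity of the liminf it suffices to show that each of the two terms of $J_{\mu,\la,\infty}$ is lower semicontinuous along this subsequence. For the fidelity term this is again the application of Fatou's lemma above. For the weighted total variation term, one invokes the standard lower semicontinuity of $v\mapsto\int\rho^2\,\d|Dv|$ under $L^1_{\mathrm{loc}}$-convergence, valid because $\rho^2$ is nonnegative and continuous: for each $k$ one may write $\int_{V_k}\rho^2\,\d|Dv|=\sup\sum_i\bigl(\inf_{B_i}\rho^2\bigr)\,|Dv|(B_i)$, the supremum running over finite families of pairwise disjoint open balls $B_i\subset V_k$, so that lower semicontinuity on $V_k$ reduces to that of $v\mapsto|Dv|(B_i)$, and one then lets $k\to\infty$ by monotone convergence. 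This shows that $u$ minimizes $J_{\mu,\la,\infty}$. I expect this last lower-semicontinuity step to be the only part of the argument that is not entirely routine: because no regularity of $\partial U$ is assumed, one cannot use $L^1(U)$-compactness of bounded subsets of $\BV(U)$ and is forced to argue with $L^1_{\mathrm{loc}}$-convergence and a merely continuous (rather than smooth) weight; the rest is the standard machinery of the direct method, the only other subtlety being to ensure the candidate minimizer lies in $L^2$ and not merely in $L^1$.
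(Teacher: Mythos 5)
Your proof is correct, and on the existence side it takes a genuinely different route from the paper's. The paper gets compactness from the Banach--Alaoglu theorem in $L^2$ together with the weak-$*$ $\BV$ compactness theorem of Ambrosio--Fusco--Pallara, and then proves lower semicontinuity of the weighted total variation via the duality characterization $\int_U \rho^2\,\d|Du| = \sup_\phi \int_U \rho^2\phi\cdot\d Du$ over $\phi\in(\mathcal C_c(U))^{d\times d}$ with $\|\phi\|_{L^\infty}\le 1$, using weak-$*$ convergence of $Du_k\to Du$ against each fixed test function; the fidelity term is handled by weak $L^2$ lower semicontinuity of the norm. You instead extract $L^1_{\mathrm{loc}}(U)$ and $\mathcal L^d$-a.e.\ convergence via an exhaustion of $U$ by compactly contained open sets plus a diagonal argument, handle the fidelity term by Fatou (using mutual absolute continuity of $\mu$ and Lebesgue measure), and handle the weighted total variation by a disjoint-ball approximation of $\int\rho^2\,\d|Dv|$ from below. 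Both are sound. Your route is deliberately agnostic about the regularity of $\partial U$ — a reasonable concern, since the proposition assumes nothing about $\partial U$; the paper's citation of \cite[Theorem~3.23]{afpbook} for weak-$*$ $\BV(U)$ compactness formally requires an extension domain, although its proof really only uses $L^1_{\mathrm{loc}}$ convergence and weak-$*$ convergence of the gradients as measures, which hold with no boundary assumption, so the paper's argument is repairable. One small remark on your lower-semicontinuity step: the representation of $\int_{V_k}\rho^2\,\d|Dv|$ as a supremum over finite families of disjoint balls is correct but needs a Vitali-type covering argument to establish; the more economical path (and closer to the paper's) is the standard formula $\int_V f\,\d|Dv|=\sup\{\int_V\phi\cdot\d Dv:\phi\in(\mathcal C_c(V))^{d\times d},\ |\phi(x)|\le f(x)\}$ for nonnegative continuous $f$, each term of which is already lower semicontinuous under $L^1_{\mathrm{loc}}$ convergence via integration by parts against smooth $\phi$. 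The uniqueness argument (parallelogram identity plus $|D(u+v)|+|D(u-v)|\ge 2|Du|$ as measures) is the same fact the paper invokes via \cref{eq:uniformconvexity}, spelled out in more detail.
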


\begin{proof}
We start by observing that the convexity property \cref{eq:uniformconvexity} is still valid for $\gamma = \infty$, for every $u,v \in (L^2(U) \cap \BV(U))^d$. 
Let $(u_k)_k$ be a sequence of functions in $(L^2(U)\cap\BV(U))^d$
such that 
\begin{equation}
\lim_{k\to\infty}J_{\mu,\lambda,\infty}(u_k)=\inf_{u \in (L^2(U)\cap\BV(U))^d} J_{\mu,\lambda,\infty}(u).\label{eq:convtoinf}
\end{equation}
Since $\rho$ is bounded away from zero, the functional $J_{\mu,\la,\infty}$ is coercive on $(L^2(U) \cap \BV(U))^d$.
By the Banach--Alaoglu theorem and \cite[Theorem~3.23]{afpbook} (the latter saying that sets $S$ of functions in $\BV(U)$ for which $\sup_{u\in S}\int_U|u|\,\dif x+ |Du|(U)<\infty$ are weakly-$*$ precompact), by
passing to a subsequence we can assume that there is a $u\in (L^2(U)\cap\BV(U))^d$
such that $u_k\to u$ weakly in $(L^2(U))^d$ and weakly-$*$ in $(\BV(U))^d$.
From the weak convergence in $(L^2(U))^d$ we see that
\[
\int|u(x)-x|^2\,\dif\mu(x)\le\liminf_{k\to\infty}\int|u_k(x)-x|^2\,\dif\mu(x).
\]
From the weak-$*$ convergence in $(\BV(U))^d$ we see that
\begin{align*}
\int_U\rho(x)^2\,\dif|Du|(x) 
& = \sup_{\phi} \int_U \rho(x)^2 \phi(x) \cdot \dif D u(x) \\
& \le\liminf_{k\to\infty} \sup_{\phi} \int_U\rho(x)^2\phi(x) \cdot \dif Du_k(x) \\
& =\liminf_{k\to\infty} \int_U\rho(x)^2\,\dif|Du_k|(x),
\end{align*}
where the supremum is over all $\phi \in (\mcl C_c(U))^{d^2}$ such that $\|\phi\|_{L^\infty(U)} \le 1$.
The last two displays and \cref{eq:convtoinf} imply that $J_{\mu,\lambda,\infty}(u)=\inf J_{\mu,\lambda,\infty}$,
so we can take $u_{\mu,\lambda,\infty}=u$. The uniqueness of~$u_{\mu,\lambda,\infty}$
follows from the uniform convexity \cref{eq:uniformconvexity}.
\end{proof}
A direct consequence of the convexity property \eqref{eq:uniformconvexity} is that, for every $\gamma \in (0,\infty)$ and $u\in (L^2(\mu))^d$, we have
\begin{align}
\int|u-u_{\mu,\lambda,\gamma}|^2\,\dif\mu & \le2\left(J_{\mu,\lambda,\gamma}(u)+J_{\mu,\lambda,\gamma}(u_{\mu,\lambda,\gamma})\right)-4J_{\mu,\lambda,\gamma}\left(\frac{u_{\mu,\lambda,\gamma}+u}{2}\right)\nonumber \\
 & \le2\left(J_{\mu,\lambda,\gamma}(u)-\inf J_{\mu,\lambda,\gamma}\right).\label{eq:applyuniformconvexity}
\end{align}
Under the assumptions of \Cref{prop:gradfnal-unique-minimizer}, the inequalities in \cref{eq:applyuniformconvexity} remain valid with $\gamma = \infty$, provided that we also impose that $u \in (L^2(U) \cap \BV(U))^d$. 
Another important fact will be that, for every $\gamma\in(0,\infty]$,
\begin{equation}
0 \le \inf J_{\mu,\lambda,\gamma}\le J_{\mu,\lambda,\gamma}(\centroid{\mu}(\mathbf{R}^{d}))=\int|x-\centroid{\mu}(\mathbf{R}^{d})|^2\,\dif\mu(x),
\label{eq:infJub}
\end{equation}
where we note that the right-hand side is the variance of a random variable
distributed according to $\mu$, and in particular is independent
of $\lambda$ and $\gamma$.

\section{Stability with respect to \texorpdfstring{$\infty$}{∞}-Wasserstein perturbations of the measure}\label{sec:Winftystab}
Throughout the paper, for any two measures $\mu$ and $\nu$ on $\Rd$, we let $\mathcal{W}_{\infty}(\mu,\nu)$
be the $\infty$-Wasserstein distance between $\mu$ and $\nu$, namely
\[
\mathcal{W}_{\infty}(\mu,\nu)=\adjustlimits\inf_{\pi}\esssup_{(x,y)\sim\pi}|x-y|,
\]
where the infimum is taken over all couplings $\pi$ of $\mu$ and
$\nu$. It is classical to verify that this infimum is achieved (see e.g. \cite[Proposition~2.1]{CDPJ08}). We call any $\pi$ achieving this infimum an \emph{$\infty$-optimal transport plan} from $\mu$ to $\nu$. In this section we prove that, for finite~$\gamma$, the minimizer $u_{\mu,\lambda,\gamma}$ is stable under $\infty$-Wasserstein perturbations of $\mu$.
\begin{prop}
\label{prop:Winftystability}
There is a universal constant $C$ such that the following holds.
Let $\gamma,\lambda,M\in (0,\infty)$ and
let $\mu,\tilde{\mu}$ be two probability measures on $\mathbf{R}^{d}$
with supports contained in a common Euclidean ball $B$ of diameter
$M$. There exists an $\infty$-optimal transport plan $\pi$ from
$\mu$ to $\tilde{\mu}$ such that
\begin{equation}
\int|u_{\mu,\lambda,\gamma}(x)-u_{\tilde{\mu},\lambda,\gamma}(\tilde{x})|^2\,\dif\pi(x,\tilde{x})\le C(M+1)^2(\gamma+1)\mathcal{W}_{\infty}(\mu,\tilde{\mu}).\label{eq:uutildecoupling}
\end{equation}
\end{prop}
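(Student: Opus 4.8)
The plan is to exploit the uniform convexity of $J_{\mu,\lambda,\gamma}$ via the inequality \cref{eq:applyuniformconvexity}: if $\pi$ is any coupling of $\mu$ and $\tilde\mu$, then setting $\tilde u := u_{\mu,\lambda,\gamma}$ transported through $\pi$, i.e.\ defining a competitor for the $\tilde\mu$-problem by $\tilde x \mapsto u_{\mu,\lambda,\gamma}(x)$ along the plan, we get
\[
\int |\tilde u(\tilde x) - u_{\tilde\mu,\lambda,\gamma}(\tilde x)|^2 \, \d\tilde\mu(\tilde x) \le 2\left( J_{\tilde\mu,\lambda,\gamma}(\tilde u) - \inf J_{\tilde\mu,\lambda,\gamma} \right),
\]
and symmetrically with the roles of $\mu$ and $\tilde\mu$ swapped. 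So the whole problem reduces to comparing $J_{\tilde\mu,\lambda,\gamma}$ evaluated at a transported version of $u_{\mu,\lambda,\gamma}$ with $\inf J_{\tilde\mu,\lambda,\gamma} \le J_{\tilde\mu,\lambda,\gamma}(\,\cdot\,)$ evaluated at a transported version of $u_{\tilde\mu,\lambda,\gamma}$ itself — in other words, to bounding how much the value of the functional changes when we pull back $u_{\tilde\mu,\lambda,\gamma}$ from $\tilde\mu$ to $\mu$ and push the result forward again, or more simply how much $J_{\mu,\lambda,\gamma}(v\circ T) - J_{\tilde\mu,\lambda,\gamma}(v)$ can be for $v = u_{\tilde\mu,\lambda,\gamma}$ and $T$ a map realizing the $\infty$-optimal plan. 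The key point is that $\mathcal W_\infty(\mu,\tilde\mu) =: \delta$ means there is a plan $\pi$ under which $|x - \tilde x| \le \delta$ almost surely.

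Concretely, I would fix an $\infty$-optimal plan $\pi$, let $v := u_{\tilde\mu,\lambda,\gamma}$, and estimate $J_{\mu,\lambda,\gamma}(w) - J_{\tilde\mu,\lambda,\gamma}(v)$ where $w$ is the function on $\supp\mu$ obtained by pulling back $v$ along $\pi$ (using a disintegration of $\pi$; if $\pi$ is not induced by a map one averages, which only helps by Jensen for the fidelity term and is handled by Fubini for the fusion term). The fidelity term $\int |w(x) - x|^2 \d\mu - \int |v(\tilde x) - \tilde x|^2 \d\tilde\mu$ is controlled by $|x - \tilde x| \le \delta$ together with the a priori bound $|v| \lesssim M$ (which follows because $v$ is a minimizer: its values must lie in the convex hull of $\supp\tilde\mu$, or at worst one uses \cref{eq:infJub} to bound $\int|v-x|^2\d\tilde\mu \le \operatorname{Var}_{\tilde\mu} \lesssim M^2$), giving something like $C(M+1)\delta$. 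The fusion term is where the factor $\gamma$ enters: we must compare
\[
\lambda\gamma^{d+1}\iint \e^{-\gamma|x-y|}|w(x)-w(y)|\,\d\mu(x)\,\d\mu(y)
\]
with its $\tilde\mu$-analogue. Writing each $x = \td x$-shifted by at most $\delta$, the kernel $\e^{-\gamma|x-y|}$ differs from $\e^{-\gamma|\tilde x - \tilde y|}$ by a multiplicative factor $\e^{\pm 2\gamma\delta}$, i.e.\ a relative error of order $\gamma\delta$ (for $\gamma\delta$ bounded; the unbounded case can be treated separately or absorbed since the left side of \cref{eq:uutildecoupling} is trivially bounded by $C M^2 \ge C(M+1)^2\gamma\delta$ once $\gamma\delta \gtrsim 1$), while $|w(x) - w(y)| = |v(\tilde x) - v(\tilde y)|$ exactly by construction. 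Since the fusion term of $v$ itself is at most $\inf J_{\tilde\mu,\lambda,\gamma}/\lambda \le \operatorname{Var}_{\tilde\mu}/\lambda \lesssim M^2/\lambda$ — wait, more carefully, $\lambda\gamma^{d+1}\iint \e^{-\gamma|\tilde x-\tilde y|}|v(\tilde x)-v(\tilde y)| \le J_{\tilde\mu,\lambda,\gamma}(v) \le C M^2$ by \cref{eq:infJub} — the fusion term difference is of order $\gamma\delta \cdot C M^2 / \lambda \cdot \lambda = C\gamma\delta M^2$. Hmm, let me re-examine: we get $\big|\text{fusion}_\mu(w) - \text{fusion}_{\tilde\mu}(v)\big| \le (\e^{2\gamma\delta}-1)\,\text{fusion}_{\tilde\mu}(v) \le C\gamma\delta\, M^2$ when $\gamma\delta \le 1$; and the fidelity difference is $\le C(M+1)\delta \le C(M+1)^2\gamma\delta$ since $\gamma \ge$ nothing in particular, but $\delta \le (M+1)\gamma\delta$ fails for small $\gamma$ — however one can always assume $\gamma \ge$ a constant by the trivial bound argument, or note $\delta \le \mathcal W_\infty \le M$ so $(M+1)\delta \le (M+1)^2$ and $\gamma \ge \ldots$; the cleanest route is to observe that when $\gamma \le 1$ the fidelity bound $C(M+1)\delta$ must itself be re-derived to carry a $\gamma$, which it does not, so instead I would split: the inequality is only interesting when the right side is small, and there the argument goes through; I would state it cleanly as $C(M+1)^2(\gamma\delta \wedge 1)$ internally and then note $\gamma\delta\wedge 1 \le \gamma\delta$ always while $C(M+1)^2 \cdot 1$ dominates the left side when $\gamma\delta \ge 1$.

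Combining, $J_{\mu,\lambda,\gamma}(w) \le J_{\tilde\mu,\lambda,\gamma}(v) + C(M+1)^2\gamma\delta = \inf J_{\tilde\mu,\lambda,\gamma} + C(M+1)^2\gamma\delta$, and since $\inf J_{\mu,\lambda,\gamma} \le J_{\mu,\lambda,\gamma}(w)$ trivially we also get $\inf J_{\mu,\lambda,\gamma} \le \inf J_{\tilde\mu,\lambda,\gamma} + C(M+1)^2\gamma\delta$; by symmetry the two infima differ by at most $C(M+1)^2\gamma\delta$. Then, taking instead $w$ built from $u := u_{\mu,\lambda,\gamma}$ pulled back to $\tilde\mu$, \cref{eq:applyuniformconvexity} for the $\tilde\mu$-problem gives $\int |w - u_{\tilde\mu,\lambda,\gamma}|^2\d\tilde\mu \le 2(J_{\tilde\mu,\lambda,\gamma}(w) - \inf J_{\tilde\mu,\lambda,\gamma}) \le 2(J_{\mu,\lambda,\gamma}(u) + C(M+1)^2\gamma\delta - \inf J_{\tilde\mu,\lambda,\gamma}) = 2(\inf J_{\mu,\lambda,\gamma} - \inf J_{\tilde\mu,\lambda,\gamma} + C(M+1)^2\gamma\delta) \le C(M+1)^2\gamma\delta$, which is exactly \cref{eq:uutildecoupling} with $\pi$ the optimal plan used to define $w$. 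The main obstacle I anticipate is purely bookkeeping: handling the case where the optimal $\pi$ is not deterministic, so that "pulling back $v$ along $\pi$" means taking a conditional expectation $w(x) = \int v(\tilde x)\,\d\pi_x(\tilde x)$; one must check that convexity of $t\mapsto|t - x|^2$ controls the fidelity term (Jensen) and that $|w(x) - w(y)| \le \iint |v(\tilde x) - v(\tilde y)|\,\d\pi_x(\tilde x)\d\pi_y(\tilde y)$ controls the fusion term after Fubini — both routine but needing care — together with verifying that the same $\pi$ can be used in both directions (it can, since $\infty$-optimal plans are symmetric) so that the final coupling in \cref{eq:uutildecoupling} is genuinely an $\infty$-optimal transport plan.
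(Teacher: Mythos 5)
The first half of your proposal — fix an $\infty$-optimal plan $\pi$, pull back $u_{\tilde\mu,\lambda,\gamma}$ by averaging along the disintegration to get $\bar u(x)=\int u_{\tilde\mu,\lambda,\gamma}(\tilde x)\,\dif\nu(\tilde x\mid x)$, bound the fidelity and fusion terms of $J_\mu(\bar u)$ against $\inf J_{\tilde\mu}$ using $|x-\tilde x|\le W$ and the $\e^{\pm 2\gamma W}$ kernel comparison plus Jensen, and then compare the two infima — is the same as the paper's and is correct.

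The gap is in the last step. You write that $\int|w-u_{\tilde\mu,\lambda,\gamma}|^2\,\dif\tilde\mu\le C(M+1)^2\gamma W$, with $w(\tilde x)=\int u_{\mu,\lambda,\gamma}(x)\,\dif\nu(x\mid\tilde x)$, ``is exactly \cref{eq:uutildecoupling} with $\pi$ the optimal plan used to define $w$.'' It is not: what is wanted is $\int|u_{\mu,\lambda,\gamma}(x)-u_{\tilde\mu,\lambda,\gamma}(\tilde x)|^2\,\dif\pi(x,\tilde x)$, and by Jensen (convexity of $|\cdot|^2$) the quantity you control is the \emph{smaller} one:
\begin{equation*}
\int|w(\tilde x)-u_{\tilde\mu,\lambda,\gamma}(\tilde x)|^2\,\dif\tilde\mu(\tilde x)\;\le\;\int|u_{\mu,\lambda,\gamma}(x)-u_{\tilde\mu,\lambda,\gamma}(\tilde x)|^2\,\dif\pi(x,\tilde x).
\end{equation*}
Having a bound on the left side does not give one on the right side unless the conditional laws $\nu(\cdot\mid\tilde x)$ are point masses, i.e.\ unless $\pi$ is induced by a map. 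This is not ``purely bookkeeping'': for a general $\mu$ (in particular, for the empirical measures to which the proposition is ultimately applied) the $\infty$-optimal plan has no reason to be deterministic, and the argument as you have set it up genuinely stops short of the claimed conclusion.

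The paper closes this gap with an approximation step that you do not have: introduce $\mu_\eps$ absolutely continuous with $\mathcal W_\infty(\mu,\mu_\eps)\le\eps$; absolute continuity guarantees, via \cite[Theorems~5.5 and~3.2]{CDPJ08}, the existence of Monge maps $T_\eps:\supp\mu_\eps\to\supp\mu$ and $\tilde T_\eps:\supp\mu_\eps\to\supp\tilde\mu$ realizing $\infty$-optimal plans. For a deterministic plan the averaged pullback is just the composition, so the earlier estimate applies verbatim to give bounds on $\int|u_\mu\circ T_\eps-u_{\mu_\eps}|^2\,\dif\mu_\eps$ and $\int|u_{\mu_\eps}-u_{\tilde\mu}\circ\tilde T_\eps|^2\,\dif\mu_\eps$, hence by the triangle inequality on $\int|u_\mu\circ T_\eps-u_{\tilde\mu}\circ\tilde T_\eps|^2\,\dif\mu_\eps$. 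One then takes a weak subsequential limit of $(T_{\eps_k},\tilde T_{\eps_k})_*\mu_{\eps_k}$ as $\eps_k\downarrow 0$ to produce a coupling $\pi$ of $\mu$ and $\tilde\mu$ for which \cref{eq:uutildecoupling} holds, and checks separately that this $\pi$ is $\infty$-optimal. Your observation about handling the $\gamma W\gtrsim 1$ regime via the trivial $M^2$ bound is fine and matches the paper's final smoothness-of-$Q$ step, but the Monge-map approximation is the missing ingredient.
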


\begin{proof}
Throughout the proof, $\lambda$ and $\gamma$ will remain fixed,
so we write $J_{\mu}=J_{\mu,\lambda,\gamma}$ and $u_{\mu}=u_{\mu,\lambda,\gamma}$.
(Nonetheless, we emphasize that the constant $C$ in the statement
of the theorem does \emph{not} depend on $\lambda$ or $\gamma$.)
Let $\pi$ be an $\infty$-optimal transport plan from $\mu$ to $\tilde{\mu}$.
We write the disintegration
\[
\dif\pi(x,\tilde{x})=\dif\nu(\tilde{x}\mid x)\,\dif\mu(x)
\]
and define
\[
\overline{u}(x)\coloneqq\int u_{\tilde{\mu}}(\tilde{x})\,\dif\nu(\tilde{x}\mid x).
\]
We have
\begin{align}
\inf J_{\tilde{\mu}} & =\int|u_{\tilde{\mu}}(\tilde{x})-\tilde{x}|^2\,\dif\tilde{\mu}(\tilde{x})+\lambda\gamma^{d+1}\iint\e^{-\gamma|\tilde{x}-\tilde{y}|}|u_{\tilde{\mu}}(\tilde{x})-u_{\tilde{\mu}}(\tilde{y})|\,\dif\tilde{\mu}(\tilde{x})\,\dif\tilde{\mu}(\tilde{y})\nonumber \\
 & =\iint|u_{\tilde{\mu}}(\tilde{x})-\tilde{x}|^2\,\dif\nu(\tilde{x}\mid x)\,\dif\mu(x)\nonumber \\
 & \qquad+\lambda\gamma^{d+1}\iiiint\e^{-\gamma|\tilde{x}-\tilde{y}|}|u_{\tilde{\mu}}(\tilde{x})-u_{\tilde{\mu}}(\tilde{y})|\,\dif\nu(\tilde{x}\mid x)\,\dif\mu(x)\,\dif\nu(\tilde{y}\mid y)\,\dif\mu(y).\label{eq:infJ}
\end{align}
For the first term on the right side of \cref{eq:infJ}, we write
\begin{align}
|u_{\tilde{\mu}}(\tilde{x})-\tilde{x}|^2 & =|u_{\tilde{\mu}}(\tilde{x})-x|^2-|x-\tilde{x}|^2 + 2(u_{\tilde{\mu}}(\tilde{x})-\tilde{x})\cdot(x-\tilde{x})\nonumber\\
&\ge|u_{\tilde{\mu}}(\tilde{x})-x|^2-3M|x-\tilde{x}|.\label{eq:firsttermthing}
\end{align}
For the second term on the right side of \cref{eq:infJ}, we note that, for $\mu$-a.e. $x,y$, on the
support of $\nu(\tilde{x}\mid x)\otimes\nu(\tilde{y}\mid y)$ we have,
writing $W:=\mathcal{W}_{\infty}(\mu,\tilde{\mu})$,
\[
|\tilde{y}-\tilde{x}|\le2W+|y-x|,
\]
so
\[
\e^{-\gamma|\tilde{x}-\tilde{y}|}\ge\e^{-2\gamma W}\e^{-\gamma|y-x|}.
\]
Thus we can write
\begin{align}
 & \iiiint\e^{-\gamma|\tilde{x}-\tilde{y}|}|u_{\tilde{\mu}}(\tilde{x})-u_{\tilde{\mu}}(\tilde{y})|\,\dif\nu(\tilde{x}\mid x)\,\dif\mu(x)\,\dif\nu(\tilde{y}\mid y)\,\dif\mu(y)\nonumber \\
 & \quad\ge\e^{-2\gamma W}\iint\e^{-\gamma|x-y|}\left(\iint|u_{\tilde{\mu}}(\tilde{x})-u_{\tilde{\mu}}(\tilde{y})|\,\dif\nu(\tilde{x}\mid x)\,\dif\nu(\tilde{y}\mid y)\right)\,\dif\mu(x)\,\dif\mu(y)\nonumber \\
 & \quad\ge\e^{-2\gamma W}\iint\e^{-\gamma|x-y|}|\overline{u}(x)-\overline{u}(y)|\,\dif\mu(x)\,\dif\mu(y),\label{eq:secondterm}
\end{align}
where we used Jensen's inequality in the last step. Substituting \cref{eq:firsttermthing}
and \cref{eq:secondterm} into \cref{eq:infJ}, we obtain
\begin{align*}
\inf J_{\tilde{\mu}} & \ge\iint|u_{\tilde{\mu}}(\tilde{x})-x|^2\,\dif\nu(\tilde{x}\mid x)\,\dif\mu(x)-3M\iint|x-\tilde{x}|\,\dif\pi(x,\tilde{x})\\
 & \qquad+\lambda\gamma^{d+1}\e^{-2\gamma W}\iint\e^{-\gamma|x-y|}|\overline{u}(x)-\overline{u}(y)|\,\dif\mu(x)\,\dif\mu(y)\\
 & \ge\int|\overline{u}(x)-x|^2\,\dif\mu(x)+\lambda\gamma^{d+1}\e^{-2\gamma W}\iint\e^{-\gamma|x-y|}|\overline{u}(x)-\overline{u}(y)|\,\dif\mu(x)\,\dif\mu(y)-3MW\\
 & \ge\e^{-2\gamma W}J_{\mu}(\overline{u})-3MW,
\end{align*}
where in the second step we again used Jensen's inequality.
Therefore, we have
\begin{equation}
\inf J_{\mu}\le J_{\mu}(\overline{u})\le\e^{2\gamma W}\left(\inf J_{\tilde{\mu}}+3MW\right)\le\inf J_{\tilde{\mu}}+3M\e^{2\gamma W}W+\left(\e^{2\gamma W}-1\right)M^2,\label{eq:infJchain}
\end{equation}
with the last inequality by \cref{eq:infJub}. By symmetry, this implies
that
\begin{equation}
\left|\inf J_{\tilde{\mu}}-\inf J_{\mu}\right|\le3M\e^{2\gamma W}W+(\e^{2\gamma W}-1)M^2.\label{eq:infdiff}
\end{equation}
Now we have, using the second and third inequalities of \cref{eq:infJchain},
as well as \cref{eq:applyuniformconvexity} and \cref{eq:infdiff}, that
\begin{align}
\int|\overline{u}-u_{\mu}|^2\,\dif\mu & \le2\left(J_{\mu}(\overline{u})-\inf J_{\mu}\right)\le2\left(\inf J_{\tilde{\mu}}-\inf J_{\mu}\right)+6M\e^{2\gamma W}W+2\left(\e^{2\gamma W}-1\right)M^2\nonumber \\
 & \le12M\e^{2\gamma W}W+4(\e^{2\gamma W}-1)M^2\le(M+1)^2Q((\gamma+1)\mathcal{W}_{\infty}(\mu,\tilde{\mu})),\label{eq:ubarminus}
\end{align}%
where we have defined $Q(t) \coloneqq 12\e^{2t}t+4(\e^{2t}-1)$.

The remainder of the proof is very similar to the second half of the
proof of \cite[Proposition~5.3]{DM21}. For each $\eps>0$, let $\mu_{\eps}$
be a measure on the ball $B$, absolutely continuous with respect
to the Lebesgue measure, and such that
\begin{equation}
\mathcal{W}_{\infty}(\mu,\mu_{\eps})\le\eps.\label{eq:muapproxmueps}
\end{equation}
Since $\mu_\eps$ is absolutely continuous with respect to the Lebesgue measure, by \cite[Theorems~5.5 and~3.2]{CDPJ08} there are maps $T_\eps$ and $\tilde{T}_\eps$ from $\supp\mu_\eps$ to $\supp\mu$ and $\supp\tilde\mu$, respectively, such that $(\id\times T_\eps)_*(\mu_\eps)$ is an $\infty$-optimal transport plan between $\mu_\eps$ and $\mu$ and similarly $(\id\times \tilde T_\eps)_*(\mu_\eps)$ 
is an $\infty$-optimal transport plan between $\mu_\eps$ and $\tilde \mu$.
We have
\begin{equation}
\begin{aligned}\int & |u_{\mu}(T_{\eps}(x))-u_{\tilde{\mu}}(\tilde{T}_{\eps}(x))|^2\,\dif\mu_{\eps}(x)\\
 & \le 2\int|u_{\mu}(T_{\eps}(x))-u_{\mu_{\eps}}(x)|^2\,\dif\mu_{\eps}(x)+2\int|u_{\mu_{\eps}}(x)-u_{\tilde{\mu}}(\tilde{T}_{\eps}(x))|^2\,\dif\mu_{\eps}(x).
\end{aligned}
\label{eq:splituTuTtildesquared}
\end{equation}
For the first term on the right side, we use \cref{eq:ubarminus} above
with $\mu\leftarrow\mu_{\eps}$ and $\tilde{\mu}\leftarrow\mu$ (so
that $\overline{u}\leftarrow u_{\mu}\circ T_{\eps}$):
\begin{align*}
\int & |u_{\mu}(T_{\eps}(x))-u_{\mu_{\eps}}(x)|^2\,\dif\mu_{\eps}(x)\le(M+1)^2Q({(\gamma+1)}\eps).
\end{align*}
For the second term on the right side, we use \cref{eq:ubarminus} above
with $\mu\leftarrow\mu_{\eps}$ and $\tilde{\mu}\leftarrow\tilde{\mu}$
(so that $\overline{u}\leftarrow u_{\tilde{\mu}}\circ\tilde{T}_{\eps}$):
\[
\int|u_{\mu_{\eps}}(x)-u_{\tilde{\mu}}(\tilde{T}_{\eps}(x))|^2\,\dif\mu_{\eps}(x)\le(M+1)^2Q({(\gamma+1)}\mathcal{W}_{\infty}(\mu_{\eps},\tilde{\mu})).
\]
Using the last two displays in \cref{eq:splituTuTtildesquared}, we
get
\begin{align}
\int&|u_{\mu}(T_{\eps}(x))-u_{\tilde{\mu}}(\tilde{T}_{\eps}(x))|^2\,\dif\mu_{\eps}(x)\nonumber\\&\le2(M+1)^2Q({(\gamma+1)}\eps)+2(M+1)^2Q({(\gamma+1)}\mathcal{W}_{\infty}(\mu_{\eps},\tilde{\mu})).\label{eq:uTuTtilde}
\end{align}
We can find a sequence $\eps_k\downarrow0$ and a coupling $\pi$
of $\mu$ and $\tilde{\mu}$ such that $(T_{\eps_k},\tilde{T}_{\eps_k})_{*}\mu_{\eps_k}\to\pi$
as $k\to\infty$. Taking $\eps=\eps_k$ in \cref{eq:uTuTtilde}, and
then taking the limit as $k\to\infty$,
we get 
\begin{equation}
\int|u_{\mu,\lambda,\gamma}(x)-u_{\tilde{\mu},\lambda,\gamma}(\tilde{x})|^2\,\dif\pi(x,\tilde{x})\le 2(M+1)^2Q({(\gamma+1)}\mathcal{W}_{\infty}(\mu,\tilde{\mu})).
\label{eq:boundwithQ}\end{equation}
Hence, since, $Q$ is smooth, $Q(0)=0$, and the left side of \cref{eq:boundwithQ} is also evidently bounded above by $M^2$, we obtain the desired inequality
\cref{eq:uutildecoupling}.

It remains to show that $\pi$ is an $\infty$-optimal transport plan.
This follows by using \cref{eq:muapproxmueps} to note that
\begin{align*}
\esssup_{x\sim\mu_{\eps}}|T_{\eps}(x)-\tilde{T}_{\eps}(x)| & \le\esssup_{x\sim\mu_{\eps}}|T_{\eps}(x)-x|+\esssup_{x\sim\mu_\eps}|x-\tilde{T}_{\eps}(x)|\le\eps+\mathcal{W}_{\infty}(\mu_{\eps},\tilde{\mu}),
\end{align*}
and then taking limits along the subsequence $\ep_k  \downarrow 0$.
\end{proof}

\section{Convergence as \texorpdfstring{$\gamma\to\infty$}{γ→∞}}\label{sec:convasgammatoinfty}
In this section we show that, under suitable assumptions on $U$ and $\mu$, the optimizer~$u_{\mu,\lambda,\gamma}$ converges to $u_{\mu,\lambda,\infty}$ as $\gamma\to\infty$. In essence, we will obtain this by showing a quantitative version of the fact that the functional $J_{\mu,\la,\ga}$ $\Gamma$-converges to $J_{\mu,\la,\infty}$ as $\gamma$ tends to infinity.
\begin{thm}
\label{thm:convgamma}Assume that $U=\supp\mu$ is effectively star-shaped
and has a Lipschitz boundary, and that the measure $\mu$ has a density
with respect to the Lebesgue measure that is Lipschitz
on $U$ and is bounded away from zero. Then there exists a constant
$C<\infty$ such that, for every $\lambda\in (0,\infty)$, we have
\begin{equation}
|\inf J_{\mu,\lambda,\infty}-\inf J_{\mu,\lambda,\gamma}|+\int|u_{\mu,\lambda,\infty}-u_{\mu,\lambda,\gamma}|^2\,\dif\mu\le C\gamma^{-1/3}.\label{eq:gammaconvconcl}
\end{equation}
\end{thm}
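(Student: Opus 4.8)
The plan is to obtain \cref{eq:gammaconvconcl} by a two-sided comparison of the energies, using \cref{eq:applyuniformconvexity} to convert energy estimates into $L^2(\mu)$ estimates. In view of \cref{eq:applyuniformconvexity} (valid for both finite $\ga$ and $\ga = \infty$), it suffices to show $J_{\mu,\la,\ga}(u_{\mu,\la,\infty}) \le \inf J_{\mu,\la,\ga} + C\ga^{-1/3}$ together with the reverse-direction bound $J_{\mu,\la,\infty}(\widetilde u_\ga) \le \inf J_{\mu,\la,\ga} + C\ga^{-1/3}$ for a suitable $\widetilde u_\ga$ built from $u_{\mu,\la,\ga}$; combining these with \cref{eq:infJub} (to control the minima uniformly in $\la$ via the bounded-variance bound) yields the claim. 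So the two tasks are (i) a $\limsup$-type inequality: the BV minimizer is a good competitor for $J_{\mu,\la,\ga}$ at cost $O(\ga^{-1/3})$; and (ii) a $\liminf$-type inequality: any low-energy competitor for $J_{\mu,\la,\ga}$ can be turned into a competitor for $J_{\mu,\la,\infty}$ at comparable cost.

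For (i), the key computation is to understand the nonlocal fusion term $\la\ga^{d+1}\iint \e^{-\ga|x-y|}|u(x)-u(y)|\,\d\mu(x)\,\d\mu(y)$ when $u$ is a fixed BV function and $\ga \to \infty$. Writing $\mu = \rho\,\mathcal L^d$ on $U$, the inner integral localizes at scale $\ga^{-1}$ around each $x$; replacing $|u(x)-u(y)|$ by its first-order behavior and $\rho(y)$ by $\rho(x)$ (Lipschitz), one expects $\ga^{d+1}\iint \e^{-\ga|x-y|}|u(x)-u(y)|\rho(x)\rho(y)\,\d x\,\d y \to c\int \rho^2\,\d|Du|$ with $c$ as in \cref{eq:cdef}. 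To make this quantitative and one-sided (an upper bound), I would first mollify $u_{\mu,\la,\infty}$ at scale $\de \gg \ga^{-1}$ to get a Lipschitz $u_\de$ with $\int \rho^2|\nabla u_\de| \le \int \rho^2\,\d|Du_{\mu,\la,\infty}| + o_\de(1)$ and $\|u_\de - u_{\mu,\la,\infty}\|_{L^2} = O(\de^{1/2})$ or so; the effective star-shapedness and Lipschitz boundary of $U$ are what allow the mollification to be carried out near $\dr U$ without losing mass — one dilates slightly toward $x_*$ à la \cref{def:effectivelystarshaped} before convolving. For Lipschitz $u_\de$ one estimates $\ga|u_\de(x)-u_\de(y)| \le |\nabla u_\de(x)\cdot(\ga(y-x))| + \Lip(\nabla u_\de)|y-x|$, so the fusion term is $c\int\rho^2|\nabla u_\de| + O(\ga^{-1}\,\text{const}(\de))$ plus a boundary error from $y \notin U$ that is $O(\ga^{-1})$ by the Lipschitz-boundary bound on the measure of the $\ga^{-1}$-collar. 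Optimizing $\de$ against $\ga$ (balancing the $\de$-errors from mollification against the $\ga^{-1}\de^{-\text{(something)}}$ error) produces the exponent $-1/3$.

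For (ii) — which I expect to be the main obstacle — one starts from $u_{\mu,\la,\ga}$, which a priori is only in $L^2(\mu)$ and need not lie in $\BV(U)$, and must produce a genuine BV competitor. The natural move is again to mollify: set $\widetilde u_\ga := u_{\mu,\la,\ga} * \eta_{\ga^{-1}}$ (suitably extended/dilated near $\dr U$). One then needs a lower bound $\ga^{d+1}\iint \e^{-\ga|x-y|}|u(x)-u(y)|\,\d\mu(x)\,\d\mu(y) \ge c'\int \rho^2|\nabla \widetilde u_\ga| - (\text{errors})$, i.e. the fusion term controls the total variation of the mollified function from below up to $O(\ga^{-1/3})$. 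This is the quantitative $\Gamma$-$\liminf$; here one uses that convolving with an $L^1$ kernel cannot increase $\int|\nabla(\,\cdot\,)|$ in a way that beats the nonlocal functional, together with a lower bound $\e^{-\ga|x-y|} \ge \e^{-1}\1_{|x-y|\le \ga^{-1}}$ to see $\ga^{d+1}\iint \ge \e^{-1}\ga^{d+1}\iint_{|x-y|\le\ga^{-1}}$, and a Poincaré-type inequality on balls of radius $\ga^{-1}$ relating $\iint_{|x-y|\le\ga^{-1}}|u(x)-u(y)|\,\d x\,\d y$ to a bona fide BV seminorm of the local average. The delicate points are handling the boundary (the $\ga^{-1}$-collar of $U$, where part of the ball around $x$ pokes outside $\supp\mu$, so the local averaging is biased — this is exactly the "technical issue near the boundary" the authors flag, and it is what forces $\ga^{-1/3}$ rather than $\ga^{-1/2}$), and tracking the $\la$-dependence so that the final constant is $O(1)$ uniformly in $\la \in (0,\infty)$, which works because the fusion term carries the factor $\la$ on both sides of the comparison and the residual additive errors are controlled by $\inf J \le \Var_\mu \le M^2$ from \cref{eq:infJub}, independent of $\la$. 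Assembling (i) and (ii) and feeding the resulting energy gap into \cref{eq:applyuniformconvexity} (applied once with $\ga$ finite at the competitor $u_{\mu,\la,\infty}$, once with $\ga = \infty$ at the competitor $\widetilde u_\ga$) gives \cref{eq:gammaconvconcl}.
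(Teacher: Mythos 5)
Your high-level architecture — build competitors by dilating toward $x_*$ and mollifying, compare energies both ways, and convert the energy gaps into $L^2(\mu)$ bounds via \cref{eq:applyuniformconvexity} — is exactly the paper's strategy, and your part (i) matches the paper's Steps 4--6 closely (dilate $u_{\mu,\la,\infty}$, mollify at scale $\eps$, Taylor-expand the mollified function against the kernel to extract the constant $c$ from \cref{eq:cdef}, pay $O(\eps) + O(\ga^{-1}\eps^{-2})$, optimize $\eps = \ga^{-1/3}$). Part (ii), however, has a genuine gap. The Poincar\'e-type lower bound you propose, starting from the crude kernel estimate $\e^{-\ga|x-y|}\ge \e^{-1}\1_{|x-y|\le\ga^{-1}}$ and a Poincar\'e inequality on $\ga^{-1}$-balls, yields a bound of the form $(\text{fusion term of }u_\ga) \ge c'\int\rho^2|\nabla\widetilde u_\ga| - (\text{errors})$ with some dimensional constant $c'$ that is strictly \emph{smaller} than the $c$ of \cref{eq:cdef} (you already wrote $c'$, which is the right instinct — it is not $c$). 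But $J_{\mu,\la,\infty}$ carries the factor $c$, so what you actually get is $c\int\rho^2|\nabla\widetilde u_\ga| \le (c/c')\cdot(\text{fusion term}) + (\text{errors})$ with $c/c' > 1$; the fusion term gets inflated by a factor, and since both sides carry the factor $\la$, this inflation does not vanish as $\la\to\infty$. You therefore cannot conclude $J_{\mu,\la,\infty}(\widetilde u_\ga) \le J_{\mu,\la,\ga}(u_{\mu,\la,\ga}) + C\ga^{-1/3}$. Mollifying at scale $\ga^{-1}$ also makes the Taylor remainder $\ga^{-1}\eps^{-2} = \ga$ blow up, so you cannot simply swap the Poincar\'e step for the Taylor step without also changing the scale.

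The fix (which is what the paper does) is to avoid lower-bounding the nonlocal fusion term of the \emph{unmollified} $u_{\mu,\la,\ga}$ directly. Instead, first prove a separate lemma that mollification (after dilation, at scale $\eps$) essentially does not increase $J_{\mu,\la,\ga}$: $J_{\mu,\la,\ga}(u_{\ga,\eps}) \le J_{\mu,\la,\ga}(u_{\mu,\la,\ga}) + C\eps$. This uses only Jensen's inequality and the Lipschitz regularity of $\rho$ — no constant-matching is needed because both sides involve the same kernel. Then compare $J_{\mu,\la,\infty}$ and $J_{\mu,\la,\ga}$ at the \emph{same} smooth function $u_{\ga,\eps}$, using exactly the Taylor expansion you already wrote down for part (i) (now with $\|D^2 u_{\ga,\eps}\|_\infty \le C\eps^{-2}$): this yields $c\int\rho^2|Du_{\ga,\eps}| \le \ga^{d+1}\iint\e^{-\ga|x-y|}|u_{\ga,\eps}(x)-u_{\ga,\eps}(y)|\rho\rho + C\ga^{-1}\eps^{-2} + C\ga^{-1}$, with the exact constant $c$ (the extra $C\ga^{-1}$ is the boundary collar you correctly anticipated). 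Chaining the two lemmas and optimizing $\eps = \ga^{-1/3}$ gives the needed $J_{\mu,\la,\infty}(u_{\ga,\ga^{-1/3}}) \le J_{\mu,\la,\ga}(u_{\mu,\la,\ga}) + C\ga^{-1/3}$. In short, apply the Taylor-against-the-kernel argument symmetrically to both directions; it gets the constant exactly right, whereas the Poincar\'e route loses a constant factor that cannot be recovered.
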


\begin{proof}
Without loss of generality, assume that the point
$x_{*}$ in \cref{def:effectivelystarshaped} is the origin, and that
the constant $C_{*}$ appearing there is $1$. We denote by $\rho$ the density of $\mu$ with respect to the Lebesgue measure.
By \cite[Theorem~5.4.1]{evansbook}, 
we can and do extend $\rho$ to a Lipschitz function
on $\mathbf{R}^{d}$, which we can also prescribe to vanish outside
of a bounded set. 
Throughout the proof, we will leave $\mu,\lambda$
fixed, and write $u_{\gamma}=u_{\mu,\lambda,\ga}$ and $J_{\gamma}=J_{\mu,\lambda,\ga}$.
The constant $C$ may depend on $\mu$
but not on $\gamma$ or $\lambda$, and may change over the course
of the argument. We let $U_{\eps}$ be the $\eps$-enlargement of $U$
as in \cref{def:effectivelystarshaped}.

For every $\eps\in(0,1)$, $\gamma\in(0,\infty]$, and $x\in U_{\eps}$,
we define
\[
\tilde{u}_{\gamma,\eps}(x)\coloneqq u_{\gamma}((1-\eps)x),
\]
and for every $x\in U$, we define 
\[
u_{\gamma,\eps}(x)\coloneqq(\tilde{u}_{\gamma,\eps}*\chi_{\eps})(x),
\]
where $*$ denotes the convolution operator, $\chi\in\mathcal{C}_{\mathrm{c}}^{\infty}(\mathbf{R}^{d};\mathbf{R}_{+})$
is a nonnegative smooth function with compact support in the unit
ball satisfying
\begin{equation}
\int_{\mathbf{R}^{d}}\chi(x)\,\dif x=1\qquad\text{and}\qquad\int_{\mathbf{R}^{d}}x\chi(x)\,\dif x=0,\label{eq:chinormalize}
\end{equation}
and where we have set $\chi_{\eps}:=\eps^{-d}\chi(\eps^{-1}\cdot)$.

\emph{Step 1}. We show that, for every $\gamma\in(0,\infty)$,
\begin{equation}
\begin{aligned}\int_{U_{\eps}}|\tilde{u}_{\gamma,\eps}(x)-x|^2\rho(x)\,\dif x & +\lambda\gamma^{d+1}\iint_{U_{\eps}^2}\e^{-\gamma|x-y|}|\tilde{u}_{\gamma,\eps}(x)-\tilde{u}_{\gamma,\eps}(y)|\rho(x)\rho(y)\,\dif x\,\dif y\\
 & \le J_{\gamma}(u_{\gamma})+C\eps.
\end{aligned}
\label{eq:gotoexpanded}
\end{equation}
To prove this, we bound the first term on the left side of \cref{eq:gotoexpanded} by 
\begin{align*}
\int_{U_\eps}|\tilde{u}_{\gamma,\eps}(x)-x|^2\rho(x)\,\dif x & \le(1-\eps)^{-d}\int_{U}\left|u_{\gamma}(x)-\frac{x}{1-\eps}\right|^2\rho\left(\frac{x}{1-\eps}\right)\,\dif x\\
 & \le\int_{U}|u_{\gamma}(x)-x|^2\rho(x)\,\dif x+C\eps,
\end{align*}
where in the second inequality we used the fact that $\rho$ is Lipschitz.
For the second term on the left side of \cref{eq:gotoexpanded}, we proceed similarly, noting that
\begin{align*}
\gamma^{d+1}\iint_{U_{\eps}^2} & \e^{-\gamma|x-y|}|\tilde{u}_{\gamma,\eps}(x)-\tilde{u}_{\gamma,\eps}(y)|\,\dif\mu(x)\,\dif\mu(y)\\
 & \le\frac{\gamma^{d+1}}{(1-\eps)^{2{d}}}\iint_{U^2}\e^{-\gamma|x-y|/(1-\eps)}|u_{\gamma}(x)-u_{\gamma}(y)|\rho\left(\frac{x}{1-\eps}\right)\rho\left(\frac{y}{1-\eps}\right)\,\dif x\,\dif y\\
 & \le\frac{\gamma^{d+1}}{(1-\eps)^{2{d}}}\iint_{U^2}\e^{-\gamma|x-y|}|u_{\gamma}(x)-u_{\gamma}(y)|\rho\left(\frac{x}{1-\eps}\right)\rho\left(\frac{y}{1-\eps}\right)\,\dif x\,\dif y\\
 & \le\frac{\gamma^{d+1}}{(1-\eps)^{2{d}}}\iint_{U^2}\e^{-\gamma|x-y|}|u_{\gamma}(x)-u_{\gamma}(y)|\rho(x)\rho(y)\,\dif x\,\dif y+C\eps.
\end{align*}
It is in this calculation that the star-shaped property is crucial:
in the second inequality, we used that the map sending $U_{\eps}$ to $U$ (i.e. the map $x\mapsto x/(1-\eps)$)
is contractive. We also used \cref{eq:infJub} and again the fact that $\rho$ is Lipschitz. Combining the last two displays, we obtain \cref{eq:gotoexpanded}.

\emph{Step 2. }We show that, for every $\gamma\in(0,\infty)$,
\begin{equation}
J_{\gamma}(u_{\gamma,\eps})\le J_{\gamma}(u_{\gamma})+C\eps.\label{eq:ugammaepsok}
\end{equation}
Using \cref{eq:chinormalize}, we can write
\begin{align*}
\int_{U}|u_{\gamma,\eps}(x)-x|^2\,\dif\mu(x) & =\int_{U}\left|\int_{U_{\eps}}(\tilde{u}_{\gamma,\eps}(y)-y)\chi_{\eps}(x-y)\,\dif y\right|^2\rho(x)\,\dif x\\
 & \le\int_{U_{\eps}}|\tilde{u}_{\gamma,\eps}(y)-y|^2\int_{\mathbf{R}^{d}}\chi_{\eps}(x-y)\rho(x)\,\dif x\,\dif y.
\end{align*}
Since $\rho$ is Lipschitz, the inner integral is close to
$\rho(y)$, up to an error bounded by $C\eps$, and we thus get that
\begin{equation}
\int_{U}|u_{\gamma,\eps}(x)-x|^2\,\dif\mu(x)\le\int_{U_{\eps}}|\tilde{u}_{\gamma,\eps}(x)-x|^2\rho(x)\,\dif x+C\eps.\label{eq:convol.l2}
\end{equation}
We also have
\begin{align*}
\gamma^{d+1}\iint_{U^2} & \e^{-\gamma|x-y|}|u_{\gamma,\eps}(x)-u_{\gamma,\eps}(y)|\rho(x)\rho(y)\,\dif x\,\dif y\\
 & \le\gamma^{d+1}\iint_{U^2}\e^{-\gamma|x-y|}\left|\int_{\mathbf{R}^{d}}[\tilde{u}_{\gamma,\eps}(x-z)-\tilde{u}_{\gamma,\eps}(y-z)]\chi_{\eps}(z)\,\dif z\right|\rho(x)\rho(y)\,\dif x\,\dif y\\
 & \le\gamma^{d+1}\iint_{U^2}\int_{\mathbf{R}^{d}}\e^{-\gamma|x-y|}\left|\tilde{u}_{\gamma,\eps}(x)-\tilde{u}_{\gamma,\eps}(y)\right|\chi_{\eps}(z)\rho(x+z)\rho(y+z)\,\dif z\,\dif x\,\dif y\\
 & \le\gamma^{d+1}\iint_{U_{\eps}^2}\e^{-\gamma|x-y|}\left|\tilde{u}_{\gamma,\eps}(x)-\tilde{u}_{\gamma,\eps}(y)\right|\left(\int_{\mathbf{R}^{d}}\chi_{\eps}(z)\rho(x+z)\rho(y+z)\,\dif z\right)\,\dif x\,\dif y\\
 & \le\gamma^{d+1}\iint_{U_{\eps}^2}\e^{-\gamma|x-y|}\left|\tilde{u}_{\gamma,\eps}(x)-\tilde{u}_{\gamma,\eps}(y)\right|\rho(x)\rho(y)\,\dif x\,\dif y+C\eps,
\end{align*}
where in the last step we used \cref{eq:gotoexpanded}, \cref{eq:infJub}, and the fact
that $\rho$ is Lipschitz. Combining the
last two displays with \cref{eq:gotoexpanded} yields \cref{eq:ugammaepsok}.

\emph{Step 3. }We show that, for every $\gamma\in[1,\infty)$ and
$\eps\in(0,1]$,
\begin{equation}
J_{\infty}(u_{\gamma,\eps})\le J_{\gamma}(u_{\gamma})+C\eps+\frac{C}{\gamma\eps^2}.\label{eq:Jinftyuga}
\end{equation}
In view of \eqref{eq:ugammaepsok}, it suffices to show \eqref{eq:Jinftyuga} with $J_\ga(u_\ga)$ replaced by $J_\ga(u_{\ga,\ep})$. 
We start by using the fact that $\|D^2u_{\gamma,\eps}\|_{L^{\infty}(\mu)}\le C\eps^{-2}$ to write
\begin{align}
\gamma^{d+1}\iint_{U^2} & \e^{-\gamma|x-y|}|u_{\gamma,\eps}(x)-u_{\gamma,\eps}(y)|\rho(x)\rho(y)\,\dif x\,\dif y\nonumber \\
 & \ge\gamma^{d+1}\iint_{U^2}\e^{-\gamma|x-y|}|Du_{\gamma,\eps}(x)\cdot(x-y)|\rho(x)\rho(y)\,\dif x\,\dif y\nonumber \\
 & \qquad-C\gamma^{d+1}\iint_{U^2}\e^{-\gamma|x-y|}\frac{|x-y|^2}{\eps^2}\rho(x)\rho(y)\,\dif x\,\dif y.\label{eq:taylor}
\end{align}
Since $\rho$ is bounded and
\begin{equation}
\gamma^{d+1}\int_{\mathbf{R}^{d}}\e^{-\gamma|x-y|}|x-y|^2\,\dif y=\gamma^{-1}\int_{\mathbf{R}^{d}}\e^{-|y|}|y|^2\,\dif y,\label{eq:chgvar}
\end{equation}
we see that the second integral on the right-hand side of \cref{eq:taylor}
is bounded by $C\gamma^{-1}\eps^{-2}$. Next, we aim to compare the
first integral on the right-hand side of \cref{eq:taylor} with the
same quantity with $\rho(y)$ replaced by $\rho(x)$. Since $\rho$
is Lipschitz and $\|D u_{\gamma,\eps}\|_{L^{\infty}(\mu)}\le C\eps^{-1}$,
the difference between these two quantities is bounded by
\[
C\eps^{-1}\gamma^{d+1}\iint_{U^2}\e^{-\gamma|x-y|}|x-y|^2\rho(x)\rho(y)\,\dif x\,\dif y\le C\gamma^{-1}\eps^{-1},
\]
using again \cref{eq:chgvar} and the boundedness of $\rho$. To complete this step, it remains to argue that 
\begin{equation}
\label{e.fix}
\gamma^{d+1}  \iint_{U^2}\e^{-\gamma|x-y|}|Du_{\gamma,\eps}(x)\cdot(x-y)|\rho(x)^2\,\dif x\,\dif y
\ge c \int \rho(x)^2 |D u_{\ga,\ep}(x)| \, \d x + C \ga^{-1} \ep^{-1}. 
\end{equation}
Recalling \cref{eq:cdef}, we see that the first term on the right-hand side above can be rewritten as
\begin{equation*}  %
\gamma^{d+1}  \int_{U} \int_{\Rd}\e^{-\gamma|x-y|}|Du_{\gamma,\eps}(x)\cdot(x-y)|\rho(x)^2\,\dif y\,\dif x.
\end{equation*}
For every $\de > 0$, we denote $U^\de := \{x \in U : \dist(x,\dr U) \le \de\}$.
Since $\|D u_{\ga,\ep}\|_{L^\infty(\mu)} \le C \ep^{-1}$, the inequality \eqref{e.fix} will follow from the fact that 
\begin{align}  
\label{e.fix2}
 \ga^{d+1} \int_U \int_{\Rd \setminus U} e^{-\ga|x-y|} |x-y| \, \d y \, \d x \le C \ga^{-1}. 
\end{align}
Since $U$ has a Lipschitz boundary, there exists $\de > 0$ such that for every $0 < \eta < \eta' < \delta$, the Lebesgue measure of $U^{\eta'} \setminus U^\eta$ is at most $C(\eta'-\eta)$. Therefore,
\begin{align*}
& \ga^{d+1} \int_U \int_{\Rd \setminus U} e^{-\ga|x-y|} |x-y| \, \d y \, \d x \\
& \qquad \le C \ga^{d+1} e^{-\de\ga} + \ga^{d+1} \sum_{k = 0}^{\lceil \de \ga \rceil} \int_{U^{(k+1)\ga^{-1}} \setminus U^{k\ga^{-1}}} \int_{\Rd \setminus U} e^{-\ga|x-y|} |x-y| \, \d y \, \d x
\\
& \qquad \le C \ga^{d+1} e^{-\de\ga}+ \ga^{d+1}  \sum_{k = 0}^{\lceil  \de \ga \rceil}  e^{-\frac{\ga k}{2}} \int_{U^{(k+1)\ga^{-1}} \setminus U^{k\ga^{-1}}} \int_{\Rd} e^{-\frac{\ga|x-y|}{2}} |x-y| \, \d y \, \d x
\\
&\qquad \le C \ga^{d+1} e^{-\de\ga} + C  \ga^{-1} \sum_{k = 0}^{\lceil \de \ga \rceil}  e^{-\frac{\ga k}{2}}  \\&\qquad\le C \ga^{-1}.
\end{align*}
This is \eqref{e.fix2}. Combining these estimates with \eqref{eq:ugammaepsok} yields \cref{eq:Jinftyuga}.

\emph{Step 4}. We show that
\begin{equation}
\int_{U_{\eps}}\left|\tilde{u}_{\infty,\eps}(x)-x\right|^2\rho(x)\,\dif x+c\lambda\iint_{U_{\eps}^2}\rho(x)^2\,\dif|D\tilde{u}_{\infty,\eps}|(x)\le J_{\infty}(u_{\infty})+C\eps.\label{eq:tduinf}
\end{equation}
This follows from the fact that the the left side of \cref{eq:tduinf}
can be rewritten as
\[
(1-\eps)^{{-d}}\int_{U}\left|u_{\infty}(x)-\frac{x}{1-\eps}\right|^2\rho\left(\frac{x}{1-\eps}\right)\,\dif x+\frac{c\lambda}{(1-\eps)^{{d+1}}}\iint_{U^2}\rho\left(\frac{x}{1-\eps}\right)^2\,\dif|Du_{\infty}|(x),
\]
and from the fact that $\rho$ is Lipschitz.

\emph{Step 5.} We show that
\begin{equation}
J_{\infty}(u_{\infty,\eps})\le J_{\infty}(u_{\infty})+C\eps.\label{eq:uinf}
\end{equation}
Arguing in the same way as for \cref{eq:convol.l2}, we see that
\begin{equation}
\int_{U}|u_{\infty,\eps}(x)-x|^2\,\dif\mu(x)\le\int_{U_{\eps}}|\tilde{u}_{\infty,\eps}(x)-x|^2\rho(x)\,\dif x+C\eps.\label{eq:re.convolution}
\end{equation}
For the second term, we notice that by \cite[Proposition 3.2]{afpbook},
we have
\[
D(\tilde{u}_{\infty,\eps}*\chi_{\eps})=D\tilde{u}_{\infty,\eps}*\chi_{\eps},
\]
and thus
\begin{align*}
\int_{U}\rho(x)^2|D(\tilde{u}_{\infty,\eps}*\chi_{\eps})|(x)\,\dif x & \le\int_{U}\int_{U_{\eps}}\rho(x)^2\chi_{\eps}(x-y)\,\dif|D\tilde{u}_{\infty,\eps}|(y)\,\dif x\\
 & \le\int_{U_{\eps}}\rho(y)^2\,\dif|D\tilde{u}_{\infty,\eps}|(y)+C\eps,
\end{align*}
where we used \cref{eq:tduinf}, \cref{eq:infJub}, and the fact that
$\rho$ is Lipschitz in the last step.
Combining this with \cref{eq:re.convolution} and using \cref{eq:tduinf}
once more, we obtain \cref{eq:uinf}.

\emph{Step 6. }We show that
\begin{equation}
J_{\gamma}(u_{\infty,\eps})\le J_{\infty}(u_{\infty})+C\eps+\frac{C}{\gamma\eps^2}.\label{eq:Jgammauinf}
\end{equation}
We decompose the fusion term of $J_{\gamma}(u_{\infty,\eps})$ into
\begin{align}
\gamma^{d+1} & \iint_{U^2}\e^{-\gamma|x-y|}|u_{\infty,\eps}(x)-u_{\infty,\eps}(y)|\rho(x)\rho(y)\,\dif x\,\dif y\nonumber \\
 & \le\gamma^{d+1}\iint_{U^2}\e^{-\gamma|x-y|}|D u_{\infty,\eps}(x)\cdot(x-y)|\rho(x)\rho(y)\,\dif x\,\dif y\nonumber \\
 & \qquad+C\gamma^{d+1}\iint_{U^2}\e^{-\gamma|x-y|}\frac{|x-y|^2}{\eps^2}\rho(x)\rho(y)\,\dif x\,\dif y,\label{eq:decompose-again}
\end{align}
and estimate each of these integrals in turn. The second integral on the right side
is the same as the second integral in~\cref{eq:taylor}, and thus is
bounded by $C\gamma^{-1}\eps^{-2}$. We next aim to compare the first
integral on the right-hand side of \cref{eq:decompose-again} with the one
where $\rho(y)$ is replaced by $\rho(x)$. Since $\rho$ is Lipschitz, 
the difference between these two quantities is bounded by
\[
C\gamma^{d+1}\iint_{U^2}\e^{-\gamma|x-y|}|Du_{\infty,\eps}(x)||x-y|^2\,\dif x\,\dif y\le C\gamma^{-1}\int_{U}|Du_{\infty,\eps}(x)|\,\dif x\le C\gamma^{-1},
\]
where we used \cref{eq:uinf} and the fact that $\rho$ is bounded above
and below in the last step. Then it remains to estimate
\begin{align*}
\gamma^{d+1} & \iint_{U^2}\e^{-\gamma|x-y|}|D u_{\infty,\eps}(x)\cdot(x-y)|\rho(x)^2\,\dif x\,\dif y\\
 & \le\int_{\mathbf{R}^2}\e^{-|y|}|y\cdot\mathrm{e}_1|\,\dif y\int_{U}|D u_{\infty,\eps}(x)|\rho(x)^2\,\dif x=c\int_{U}|D u_{\infty,\eps}(x)|\rho(x)^2\,\dif x,
\end{align*}
where we recalled \cref{eq:cdef} in the last step. Thus we have
\[
J_{\gamma}(u_{\infty,\eps})\le J_{\infty}(u_{\infty,\eps})+C\gamma^{-1}\eps^{-2},
\]
and inequality \cref{eq:Jgammauinf} then follows using \cref{eq:uinf}.

\emph{Step 7}. We can now conclude the proof. We take $\eps\coloneqq\gamma^{-1/3}$,
and using \cref{eq:Jinftyuga} and \cref{eq:Jgammauinf}, we see that
\[
J_{\infty}(u_{\infty})\le J_{\infty}(u_{\gamma,\gamma^{-1/3}})\le J_{\gamma}(u_{\gamma})+C\gamma^{-1/3}\le J_{\gamma}(u_{\infty,\gamma^{-1/3}})+C\gamma^{-1/3}\le J_{\infty}(u_{\infty})+C\gamma^{-1/3}.
\]
From this, we deduce that
\begin{equation}
|J_{\infty}(u_{\infty})-J_{\gamma}(u_{\gamma})|\le C\gamma^{-1/3},\label{eq:Jsclose}
\end{equation}
and moreover that
\begin{equation}
0\le J_{\infty}(u_{\gamma,\gamma^{-1/3}})-J_{\infty}(u_{\infty})\le C\gamma^{-1/3}.\label{eq:Jinftyugamma}
\end{equation}
By \cref{eq:applyuniformconvexity} and \cref{eq:Jinftyugamma}, we obtain
\begin{equation}
\int|u_{\gamma,\gamma^{-1/3}}-u_{\infty}|^2\,\dif\mu\le C\gamma^{-1/3}.\label{eq:uapproxclosetouinf}
\end{equation}
Using \cref{eq:applyuniformconvexity} and \cref{eq:ugammaepsok}, we
also infer that
\begin{equation}
\int|u_{\gamma,\gamma^{-1/3}}-u_{\gamma}|^2\,\dif\mu\le C\gamma^{-1/3}.\label{eq:uapproxclosetougamma}
\end{equation}
Combining \cref{eq:Jsclose}, \cref{eq:uapproxclosetouinf}, and \cref{eq:uapproxclosetougamma}
yields \cref{eq:gammaconvconcl}.
\end{proof}

\begin{rem}\label{rem:starshapedthing}
In the proof of \Cref{thm:convgamma}, the assumption that $U$ is effectively star-shaped could be replaced by the following weaker assumption: that there exist $L < \infty$ and, for every $\eps > 0$ sufficiently small, a $1$-Lipschitz injective map $P_\ep : U_\ep \to U$ with $L$-Lipschitz inverse. In \Cref{thm:maintheorem}, we could then assume that the same property holds for each of the sets $U_1, \ldots, U_L$ in place of the assumption that these sets are effectively star-shaped.
\end{rem}

\section{Properties of the limiting functional}\label{sec:limitingfnal}

In this section we show that if $\lambda$ is large enough, then the minimizer $u_{\mu,\lambda,\infty}$ of $J_{\mu,\lambda,\infty}$ recovers the connected components of $\supp\mu$.

\begin{prop}
\label{prop:limitinggivescentroids}Let $\mu$ be a probability measure
on $\mathbf{R}^{d}$ satisfying the conditions of \cref{thm:maintheorem},
so its support is the disjoint union of $\overline{U_1}\sqcup\cdots\sqcup\overline{U_{L}}$.
There is a $\lambda_{\mathrm{c}}<\infty$ such that if $\lambda\ge\lambda_{\mathrm{c}}$,
then $u_{\mu,\lambda,\infty}(x)=\centroid{\mu}(U_\ell)$ for all
$x\in U_\ell$, $\ell\in\{1,\ldots,L\}$.
\end{prop}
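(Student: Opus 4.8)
The plan is to show that the constant field $x \mapsto \centroid{\mu}(U_\ell)$ on each $U_\ell$ is a minimizer of $J_{\mu,\lambda,\infty}$ once $\lambda$ is large enough, and then to invoke uniqueness from \cref{prop:gradfnal-unique-minimizer}. Write $v_\ell := \centroid{\mu}(U_\ell)$ and let $v$ denote the function equal to $v_\ell$ on $U_\ell$. Because the closures $\overline{U_1},\ldots,\overline{U_L}$ are pairwise disjoint, $v \in \BV(U)$ with $|Dv|$ supported on the (internal) boundaries; in fact, since there are no interfaces between the components inside the open set $U = \bigcup_\ell U_\ell$, we have $Dv = 0$ as a measure on $U$, so the fusion term $c\lambda \int \rho^2 \, \d|Dv|$ vanishes. (One must be slightly careful about whether $U$ is taken to be the disjoint union, or whether the fusion term should be computed on a set containing the gaps; I would set things up so that the relevant domain is $\bigsqcup_\ell U_\ell$ and the total variation over this open set genuinely sees no jump.) Thus $J_{\mu,\lambda,\infty}(v) = \sum_\ell \int_{U_\ell} |x - v_\ell|^2 \, \d\mu(x)$, which is independent of $\lambda$.

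Next I would derive a lower bound on $J_{\mu,\lambda,\infty}(u)$ for an arbitrary competitor $u \in (L^2(U)\cap\BV(U))^d$ that beats $J_{\mu,\lambda,\infty}(v)$ unless $u = v$. The first term $\int |u(x)-x|^2\,\d\mu(x)$ is minimized, among functions that are constant on each $U_\ell$, exactly by $v$ (this is the defining property of the centroid), with a quantitative gap: for $u$ constant equal to $a_\ell$ on $U_\ell$, $\int_{U_\ell}|u-x|^2\,\d\mu = \int_{U_\ell}|v_\ell - x|^2\,\d\mu + \mu(U_\ell)|a_\ell - v_\ell|^2$. For general (non-constant) $u$, the key point is that the fusion term $c\lambda\int_{U_\ell}\rho^2\,\d|Du|$ penalizes oscillation of $u$ within each component, while the fitting term can gain at most $O(\diam(U_\ell)^2)$ per unit mass by being non-constant; since $\rho$ is bounded below and $U_\ell$ has Lipschitz boundary (hence a Poincaré / relative-isoperimetric inequality holds on $U_\ell$), one gets
\[
\int_{U_\ell} |u(x) - \langle u\rangle_{U_\ell}|^2 \,\d\mu(x) \le C_\ell |Du|(U_\ell)^2 \wedge C_\ell' |Du|(U_\ell),
\]
where $\langle u \rangle_{U_\ell}$ is the $\mu$-average of $u$ on $U_\ell$. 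Combining: $J_{\mu,\lambda,\infty}(u) \ge \sum_\ell \big( \int_{U_\ell}|v_\ell-x|^2\,\d\mu + \mu(U_\ell)|\langle u\rangle_{U_\ell} - v_\ell|^2 + c\lambda \min_\ell(\inf\rho)^2 |Du|(U_\ell) - (\text{gain from oscillation}) \big)$, and for $\lambda$ larger than some $\lambda_{\mathrm c}$ depending only on $d$, $\rho$, and the geometry of the $U_\ell$, the fusion term dominates the oscillation gain, forcing $|Du|(U_\ell) = 0$, i.e.\ $u$ constant on each $U_\ell$; then the fitting term forces that constant to be $v_\ell$.

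The main obstacle I anticipate is the Poincaré-type bound relating the $L^2$ oscillation of a $\BV$ function on $U_\ell$ to its total variation — specifically getting it in the "linear in $|Du|$" form that is needed to beat the linear-in-$|Du|$ fusion term for large but $\la$-independent $\lambda_{\mathrm c}$. The quadratic Poincaré inequality $\|u - \langle u\rangle\|_{L^1} \le C |Du|(U_\ell)$ is standard for Lipschitz domains, but one also needs an $L^2$ version, and the trade-off between the quadratic regime (small oscillation) and linear regime (using boundedness of $u - \langle u\rangle$, which itself requires an a priori $L^\infty$ bound on the minimizer — obtainable from a truncation/comparison argument since truncating $u$ toward the range of $x$ decreases both terms). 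Handling this dichotomy cleanly, and making sure all constants depend only on $\mu$ and not on $\lambda$, is the delicate part; the rest is bookkeeping with the centroid identity and the disjointness of the components.
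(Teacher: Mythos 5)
Your proposal takes a genuinely different route from the paper. The paper's proof is a calibration argument: it solves the divergence-type equation $c\sum_k D_k(\rho^2\psi_{jk}) = 2\rho(u_j - x_j)$ on $U$ with $\psi|_{\partial U} = 0$ using a Bogovskii-type result (yielding $\psi \in W^{1,p}$ with $p > d$, hence continuous and bounded on $\bar U$), and for any perturbation $v$ tests the total-variation supremum with $\phi = \psi/\|\psi\|_{L^\infty}$; after integrating by parts, the assumption $\lambda \ge \|\psi\|_{L^\infty}$ makes the fusion term of $J_{\mu,\lambda,\infty}(u+v)$ cancel the linear term $2\int(u-x)\cdot v\,\d\mu$ exactly, leaving $J_{\mu,\lambda,\infty}(u+v) \ge J_{\mu,\lambda,\infty}(u) + \int|v|^2\,\d\mu$. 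Your direct energy comparison via Poincar\'e is also workable, but you have misidentified where the difficulty lies. Expanding $\int_{U_\ell}|u-x|^2\,\d\mu$ around the $\mu$-average $\langle u\rangle_\ell$ and then around $v_\ell = \centroid{\mu}(U_\ell)$, the only term that can make $J_{\mu,\lambda,\infty}(u)$ smaller than $J_{\mu,\lambda,\infty}(v)$ is the cross term $-2\int_{U_\ell}(x-v_\ell)\cdot(u - \langle u\rangle_\ell)\,\d\mu$; the oscillation term $\int_{U_\ell}|u-\langle u\rangle_\ell|^2\,\d\mu$ and the centroid-mismatch term $\mu(U_\ell)|\langle u\rangle_\ell - v_\ell|^2$ are nonnegative and may simply be dropped. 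Because $\int_{U_\ell}(x-v_\ell)\,\d\mu = 0$, the cross term is unchanged if $\langle u\rangle_\ell$ is replaced by the Lebesgue average $\bar u_\ell$, and it is then bounded by $2\diam(U_\ell)\|\rho\|_{L^\infty}\|u - \bar u_\ell\|_{L^1(U_\ell)}$, which the ordinary $L^1$ Poincar\'e inequality for $\BV$ on a bounded connected Lipschitz domain controls by a constant times $|Du|(U_\ell)$ in \emph{every} dimension. So there is no need for an $L^2$ Poincar\'e inequality, for the quadratic/linear dichotomy you describe, or for an a priori $L^\infty$ bound on the minimizer: taking $\lambda_{\mathrm{c}}$ proportional to $\max_\ell C(U_\ell)\,\diam(U_\ell)\,\|\rho\|_{L^\infty}\big/\big(c\,(\inf_U\rho)^2\big)$ makes the fusion term dominate the cross term, and the discarded nonnegative quadratic terms then enforce uniqueness. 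What the paper's route buys is that the calibration identifies the exact first-order optimality condition, giving $\lambda_{\mathrm{c}} = \|\psi\|_{L^\infty}$; what your route buys is elementariness, at the cost of a $\lambda_{\mathrm{c}}$ expressed through $\BV$-Poincar\'e constants.
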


\begin{proof}
Let $u(x)=\centroid{\mu}(U_\ell)$ for all $x\in U_\ell$, $\ell\in\{1,\ldots,L\}$. Since
the gradient of $u$ is zero on each $U_\ell$, we have
\[
J_{\mu,\lambda,\infty}(u)=\sum_{\ell=1}^L\int_{U_\ell}|u(x)-x|^2\,\dif\mu(x).
\]
Let $U=\bigcup_{\ell=1}^L U_\ell$, $p>d$, and let $W^{1,p}(U)$ denote the usual Sobolev space with regularity~$1$ and integrability $p$. Note that $W^{1,p}(U)$ embeds continuously into $\mathcal{C}(\bar U)$ by Morrey's inequality; see \cite[Theorem~4.12]{afbook}. Let $\psi\in (W^{1,p}(U))^{d\times d}$ be a weak solution to the PDE
\begin{align}
2\rho(x)(u(x)_j-x_j)-c\sum_{k=1}^{d}D_k(\rho^2\psi_{jk})(x) & =0,\qquad x\in U,j=1,\ldots,d;\label{eq:psidef}\\
\psi|_{\partial U} & \equiv0.\label{eq:psiBC}
\end{align}
We note that the problem \cref{eq:psidef}--\cref{eq:psiBC} separates into $dL$ problems, one for each $j$ and $\ell$. Each problem can be solved by \cite[Theorem~2.4]{BS90} (which follows the approach introduced in \cite{Bog79,Bog80}). 
We have, for every $v\in (L^2(U) \cap \BV(U))^d$, 
\begin{align*}
J&_{\mu,\lambda,\infty}(u+v) =\int_U|u(x)+v(x)-x|^2\,\dif\mu(x)+c\lambda\int_U\rho(x)^2\,\dif|Dv|(x)\\
 & =J_{\mu,\lambda,\infty}(u)+\int_U\left(2 (u(x)-x)\cdot v(x)+|v(x)|^2\right)\,\dif\mu(x)+ c\lambda\int_U\rho(x)^2\,\dif|Dv|(x).
\end{align*}
A minor variant of \cref{e.total.var} takes the form
\begin{equation*}  %
\int_U\rho(x)^2\,\dif|Dv|(x) = \sup \Ll\{ \int_U  \rho(x)^2 \phi(x) \cdot \d D v(x), \ \phi \in (\mcl C(\bar U))^{d\times d} \text{ s.t. } \|\phi\|_{L^\infty(U)} \le 1\Rr\}.
\end{equation*}
Selecting $\phi = \psi/\|\psi\|_{L^\infty(U)}$, and using the assumption that
$\lambda\ge\|\psi\|_{L^\infty(U)}$, we obtain
\begin{align*}
J_{\mu,\lambda,\infty}(u+v) & \ge J_{\mu,\lambda,\infty}(u)+\int\left(2(u(x)-x)\cdot v(x)+|v(x)|^2\right)\,\dif\mu(x)\\
 & \qquad+c\sum_{j,k=1}^{d}\int \rho(x)^2\psi_{jk}(x)D_kv_j(x)\,\dif x\\
 & =J_{\mu,\lambda,\infty}(u)+\int\left(2(u(x)-x)\cdot v(x)+|v(x)|^2\right)\,\dif\mu(x)\\
 & \qquad-\sum_{j=1}^{d}\int 2\rho(x)(u(x)_j-x_j)(x)v_j(x)\,\dif x\\
 & =J_{\mu,\lambda,\infty}(u)+\int|v(x)|^2\,\dif\mu(x)\\
 & \ge J_{\mu,\lambda,\infty}(u),
\end{align*}
where we used \cref{eq:psidef} for the first equality. This implies that $u_{\mu,\lambda,\infty}=u$,
and hence the statement of the proposition with $\lambda_{\mathrm{c}}=\|\psi\|_{L^\infty(U)}$.
\end{proof}

\section{Truncation}\label{sec:truncation}

In this section we prove a stability result for when we truncate the
exponential weight. For $\gamma,\omega\in(0,\infty)$, we define the truncated
functional
\begin{equation}
\begin{aligned}
\overline{J}&_{\mu,\lambda,\gamma,\omega}(u)\\&\coloneqq\int|u(x)-x|^2\,\dif\mu(x)+\lambda\gamma^{d+1}\iint\e^{-\gamma|x-y|}\mathbf{1}\{|x-y|\le\omega\}|u(x)-u(y)|\,\dif\mu(x)\,\dif\mu(y).\end{aligned}\label{eq:Jbardef}
\end{equation}
The functional $\overline{J}_{\mu,\lambda,\gamma,\omega}$ is uniformly
convex and satisfies \cref{eq:uniformconvexity} and \cref{eq:applyuniformconvexity}
in the same way as $J_{\mu,\lambda,\gamma}$. Let $\overline{u}_{\mu,\lambda,\gamma,\omega}$
be the (unique) minimizer of $\overline{J}_{\mu,\lambda,\gamma,\omega}$.
\begin{prop}
\label{prop:truncate}Let $\gamma,\lambda,\omega>0$ and let $\mu$ be
a probability measure on $\mathbf{R}^{d}$ with compact support. Let $M\coloneqq \diam\supp\mu$.
Then we have
\begin{equation}
\int\left|\overline{u}_{\mu,\lambda,\gamma,\omega}(x)-u_{\mu,\lambda,\gamma}(x)\right|^2\,\dif\mu(x)\le2M\lambda\gamma^{d+1}\e^{-\gamma\omega}.\label{eq:truncate}
\end{equation}
\end{prop}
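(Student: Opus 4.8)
The plan is to compare the two functionals directly at each other's minimizer, exploiting the fact that the truncated and untruncated functionals differ only by a controllable ``tail'' term. First I would observe that for every $u \in (L^2(\mu))^d$ we have the pointwise inequality
\[
0 \le J_{\mu,\lambda,\gamma}(u) - \overline{J}_{\mu,\lambda,\gamma,\omega}(u) = \lambda\gamma^{d+1}\iint \e^{-\gamma|x-y|}\mathbf{1}\{|x-y|>\omega\}|u(x)-u(y)|\,\dif\mu(x)\,\dif\mu(y).
\]
To bound the right-hand side when $u$ is one of the two minimizers, I would use that any minimizer $u$ takes values in the convex hull of $\supp\mu$ (this follows because projecting $u$ onto that convex hull decreases the first term of the functional and does not increase the second, by nonexpansiveness of the projection), so $|u(x)-u(y)| \le M$ for $\mu\otimes\mu$-a.e.\ $(x,y)$. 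On the region $\{|x-y|>\omega\}$ we have $\e^{-\gamma|x-y|} \le \e^{-\gamma\omega}$, and since $\mu$ is a probability measure the double integral of the remaining factor is at most $M$. Hence for either minimizer $u \in \{u_{\mu,\lambda,\gamma}, \overline{u}_{\mu,\lambda,\gamma,\omega}\}$,
\[
0 \le J_{\mu,\lambda,\gamma}(u) - \overline{J}_{\mu,\lambda,\gamma,\omega}(u) \le M\lambda\gamma^{d+1}\e^{-\gamma\omega}.
\]

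Write $E := M\lambda\gamma^{d+1}\e^{-\gamma\omega}$ and abbreviate $u := u_{\mu,\lambda,\gamma}$, $\bar u := \overline{u}_{\mu,\lambda,\gamma,\omega}$. The next step is to chain these estimates with the minimality of each function in its own functional. Since $\bar u$ minimizes $\overline{J}_{\mu,\lambda,\gamma,\omega}$ and $u$ is admissible, $\overline{J}_{\mu,\lambda,\gamma,\omega}(\bar u) \le \overline{J}_{\mu,\lambda,\gamma,\omega}(u) \le J_{\mu,\lambda,\gamma}(u)$; combined with $J_{\mu,\lambda,\gamma}(\bar u) \le \overline{J}_{\mu,\lambda,\gamma,\omega}(\bar u) + E$ this gives $J_{\mu,\lambda,\gamma}(\bar u) \le J_{\mu,\lambda,\gamma}(u) + E = \inf J_{\mu,\lambda,\gamma} + E$. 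Now I invoke the uniform convexity bound \cref{eq:applyuniformconvexity} for $J_{\mu,\lambda,\gamma}$ with the test function $\bar u$:
\[
\int|\bar u - u|^2\,\dif\mu \le 2\bigl(J_{\mu,\lambda,\gamma}(\bar u) - \inf J_{\mu,\lambda,\gamma}\bigr) \le 2E = 2M\lambda\gamma^{d+1}\e^{-\gamma\omega},
\]
which is exactly \cref{eq:truncate}. (Alternatively one could run the symmetric argument using \cref{eq:applyuniformconvexity} for $\overline{J}_{\mu,\lambda,\gamma,\omega}$ and the test function $u$, which the excerpt notes satisfies the same convexity estimate; either route works and gives the same constant.)

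The only genuinely non-routine point is the claim that the minimizers take values in the convex hull of $\supp\mu$, so that $|u(x)-u(y)| \le M$; I would justify it via the projection argument sketched above, noting that composing any candidate $u$ with the (1-Lipschitz) orthogonal projection onto $\overline{\mathrm{conv}}\,\supp\mu$ strictly decreases $\int|u(x)-x|^2\,\dif\mu$ unless $u$ already lies in that hull $\mu$-a.e., while leaving the fusion term non-increased, and then appeal to uniqueness of the minimizer. Everything else is bookkeeping with the definition of the functionals and the already-established convexity inequality, so I expect no real obstacle beyond being careful that the constant that appears is exactly $2M\lambda\gamma^{d+1}\e^{-\gamma\omega}$ and not something larger.
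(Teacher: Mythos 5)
Your proof is correct and follows essentially the same structure as the paper's: compare $J_{\mu,\lambda,\gamma}$ and $\overline{J}_{\mu,\lambda,\gamma,\omega}$ at each other's minimizer, bound the discrepancy between the two functionals at the relevant minimizers by $M\lambda\gamma^{d+1}\e^{-\gamma\omega}$, and then invoke the uniform-convexity estimate \cref{eq:applyuniformconvexity}. You have in fact filled in a point the paper leaves implicit --- the a priori bound $|u(x)-u(y)|\le M$ for a minimizer, which you justify via the standard projection-onto-the-convex-hull argument --- so your write-up is, if anything, slightly more complete than the original.
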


In light of this statement, we 
define
\begin{equation}
\bar u_{\mu,\la,\ga} := \bar u_{\mu,\la,\ga,(d+4/3)\gamma^{-1}\log\gamma}. \label{eq:Jbarabbrvdef}
\end{equation}
Then \cref{eq:truncate} implies that
\begin{equation}
\int\left|\overline{u}_{\mu,\lambda,\gamma}(x)-u_{\mu,\lambda,\gamma}(x)\right|^2\,\dif\mu(x)\le2M\lambda\gamma^{-1/3}.\label{eq:Jbarabbrvtruncatebound}
\end{equation}

\begin{proof}[Proof of \cref{prop:truncate}]
Subtracting \cref{eq:Jdef} from \cref{eq:Jbardef}, we obtain
\[
\overline{J}_{\mu,\lambda,\gamma,\omega}(u)-J_{\mu,\lambda,\gamma}(u)=\lambda\gamma^{d+1}\iint\e^{-\gamma|x-y|}\mathbf{1}\{|x-y|>\omega\}|u(x)-u(y)|\,\dif\mu(x)\,\dif\mu(y).
\]
Taking $u=u_{\mu,\lambda,\gamma}$, we get
\begin{align*}
\overline{J}_{\mu,\lambda,\gamma,\omega}&(u_{\mu,\lambda,\gamma})-\inf J_{\mu,\lambda,\gamma} \\& =\lambda\gamma^{d+1}\iint\e^{-\gamma|x-y|}\mathbf{1}\{|x-y|>\omega\}|u_{\mu,\lambda,\gamma}(x)-u_{\mu,\lambda,\gamma}(y)|\,\dif\mu(x)\,\dif\mu(y)\\
 & \le M\lambda\gamma^{d+1}\e^{-\gamma\omega},
\end{align*}
and similarly,
\begin{align*}
J_{\mu,\lambda,\gamma} & (\overline{u}_{\mu,\lambda,\gamma,\omega})-\inf\overline{J}_{\mu,\lambda,\gamma,\omega}\\
 & =-\lambda\gamma^{d+1}\iint\e^{-\gamma|x-y|}\mathbf{1}\{|x-y|>\omega\}|\overline{u}_{\mu,\lambda,\gamma,\omega}(x)-\overline{u}_{\mu,\lambda,\gamma,\omega}(y)|\,\dif\mu(x)\,\dif\mu(y)\le 0.
\end{align*}
Therefore, using \cref{eq:applyuniformconvexity} and the last two displays
we have
\begin{align*}
\int & \left|\overline{u}_{\mu,\lambda,\gamma,\omega}(x)-u_{\mu,\lambda,\gamma}(x)\right|^2\,\dif\mu(x)\\
 & \le2\left(J_{\mu,\lambda,\gamma}(\overline{u}_{\mu,\lambda,\gamma,\omega})-\inf J_{\mu,\lambda,\gamma}\right)\\
 & \le2\left[J_{\mu,\lambda,\gamma}(\overline{u}_{\mu,\lambda,\gamma,\omega})-\inf\overline{J}_{\mu,\lambda,\gamma,\omega}\right]+2\left[\overline{J}_{\mu,\lambda,\gamma,\omega}(u_{\mu,\lambda,\gamma})-\inf J_{\mu,\lambda,\gamma}\right]\\
 & \le2M\lambda\gamma^{d+1}\e^{-\gamma\omega},
\end{align*}
as claimed.
\end{proof}

\section{Proof of Theorem~\ref{thm:maintheorem}}\label{sec:mainthmproof}

In this section we prove \cref{thm:maintheorem}. We first need a result
from \cite{GS15}. Recall the notation $d'$ introduced in \eqref{e.def.dprime}. 
\begin{prop}
\label{prop:glivenkocantelliwinfty}Let $U\subset\mathbf{R}^{d}$
be a bounded, connected domain with Lipschitz boundary. Let $\mu$
be a probability measure on $U$, absolutely continuous with respect to
Lebesgue measure, with density bounded above and away from zero 
on $U$. For every $\alpha\ge 1$, there is a constant $C<\infty$, depending only
on $U$, $\alpha$, and $\mu$, such that the following holds. If $(X_{n})_{n\in\mathbf{N}}$
are independent random variables with law $\mu$, then for every integer $N \ge 1$, 
\[
\mathbf{P}\left(\mathcal{W}_{\infty}\left(\mu,\frac{1}{N}\sum_{n=1}^{N}\delta_{X_{n}}\right)\ge CN^{-1/(d\vee2)}(\log N)^{1/d'}\right)\le CN^{-\alpha}.
\]
\end{prop}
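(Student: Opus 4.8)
This proposition is essentially the content of \cite{GS15}, and the plan is to invoke that work; what I would check is only that their argument, run with the right thresholds, yields the polynomial failure probability $C N^{-\alpha}$ for any fixed $\alpha$ with a constant uniform in $N$, and the rate in the stated form. For the reader's orientation I would recall the structure of the proof. When $d = 1$ one argues directly: since $\mathcal{W}_{\infty}(\mu,\nu)$ equals the uniform distance between the quantile functions of $\mu$ and $\nu$ and the density of $\mu$ is bounded above and away from zero, the bound reduces to the Dvoretzky--Kiefer--Wolfowitz inequality for $\|F_{\mu_N} - F_\mu\|_{L^\infty}$. So the substance is in $d \ge 2$, and one can work with $\mu$ directly: because $\partial U$ is Lipschitz and $\rho$ is bounded above and away from zero, $\mu(Q) \asymp |Q \cap U|$ for every cube $Q$, and $|Q \cap U| \asymp |Q|$ for the dyadic cubes entering the construction, which is all that is used.

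First I would fix the target scale $\delta := C N^{-1/(d\vee 2)}(\log N)^{1/d'}$ and the dyadic scales $2^{-k}$, $0 \le k \le K$, with $2^{-K} \asymp \delta$. For a dyadic cube $Q$ at scale $k$ meeting $U$, $N\mu_N(Q)$ is a Binomial$(N,\mu(Q))$ variable with mean $\asymp N 2^{-kd}$, so Bernstein's inequality gives, for $0 < t \le 1$,
\[
\mathbf{P}\big(|\mu_N(Q) - \mu(Q)| \ge t\,\mu(Q)\big) \le 2\exp\!\big(-c\,t^2 N 2^{-kd}\big),
\]
with an exponent linear in $t$ once $t \ge 1$. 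There are at most $C 2^{kd}$ cubes at scale $k$ and $K \lesssim \log N$ scales, so a union bound shows that, off an event of probability at most $C N^{-\alpha}$, every dyadic cube $Q$ at every scale $k \le K$ obeys
\[
|\mu_N(Q) - \mu(Q)| \le C\Big(\sqrt{\tfrac{\log N}{N}}\,2^{-kd/2} + \tfrac{\log N}{N}\Big).
\]
Here the exponent $\tfrac12$ on $\log N$ is exactly what lets the threshold $t \asymp \sqrt{\alpha\log N/(N2^{-kd})}$ absorb both the $\alpha\log N$ from the union bound and the $\log(2^{Kd}) \lesssim \log N$ from the number of cubes.

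The remaining step is to upgrade these per-cube mass estimates into the transport conclusion: by the Strassen-type characterization of $\mathcal{W}_{\infty}$ it suffices to produce a coupling of $\mu$ and $\mu_N$ displacing every point by at most $\delta$, equivalently to verify $\mu(A) \le \mu_N(A^\delta)$ for all Borel $A$. This is where the real work of \cite{GS15} lies (in the tradition of Ajtai--Koml\'os--Tusn\'ady, Leighton--Shor, and Talagrand on random matchings): one corrects the mass discrepancies from coarse to fine, at scale $k$ moving only the excess/deficit that the scale-$k$ partition exhibits, and one must route that mass within distance $O(2^{-k})$ of where it sits without the corrections at different scales conflicting. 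For $d \ge 3$ the discrepancies $\sqrt{\log N/(N2^{-kd})}$ decay in $k$ fast enough that the finest scale dominates and $\delta \asymp (\log N/N)^{1/d}$, which already saturates the empty-cube lower bound; for $d = 2$ the scales contribute comparably and the sharp combinatorics responsible for the extra $(\log N)^{1/4}$ --- i.e.\ $(\log N)^{3/4}$ rather than $(\log N)^{1/2}$, equivalently $d' = \tfrac43$ --- is needed. I expect this routing and bookkeeping to be the only genuine obstacle; the concentration step above is routine. Since all of it is carried out in \cite{GS15}, in the write-up I would cite that reference and add only the short observation about reading off the $C N^{-\alpha}$ tail.
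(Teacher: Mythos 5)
Your proposal takes exactly the same route as the paper: for $d \ge 2$ cite \cite[Theorem~1.1]{GS15}, and for $d = 1$ reduce to the Dvoretzky--Kiefer--Wolfowitz / Kolmogorov--Smirnov bound via the quantile-function characterization of $\mathcal{W}_\infty$. Your extra paragraph sketching the dyadic/Bernstein/AKT-matching structure inside \cite{GS15} is accurate but not needed, since the paper (correctly) treats that result as a black box.
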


\begin{proof}
For $d\ge2$, this is a restatement of \cite[Theorem~1.1]{GS15}. For
$d=1$, the result can be obtained from the classical Kolmogorov-Smirnov quantitative version of the Glivenko-Cantelli theorem. 
\end{proof}
Now we can prove \cref{thm:maintheorem}. For a measure $\mu$ on $\Rd$ and a Borel set $U$, we denote by $\mu \mres U$ the restriction of $\mu$ to the set $U$. 
\begin{proof}[Proof of \cref{thm:maintheorem}.]
 Recalling \cref{eq:Jbarabbrvdef}, it is clear that if $\gamma$ is
so large that
\begin{equation}
(d+4/3)\gamma^{-1}\log\gamma\le\min_{1 \le \ell \neq \ell'\le L}\dist(U_\ell,U_{\ell'}),\label{eq:mindist}
\end{equation}
then
\begin{equation}
\overline{u}_{\mu_{N}\mressmall U_\ell,\lambda,\gamma}(x)=\overline{u}_{\mu_{N},\lambda,\gamma}(x),\qquad\text{for all }x\in U_\ell,\label{eq:truncatedagreediscrete}
\end{equation}
and similarly
\begin{equation}
\overline{u}_{\mu\mressmall U_\ell,\lambda,\gamma}(x)=\overline{u}_{\mu,\lambda,\gamma}(x),\qquad\text{for all }x\in U_\ell.\label{eq:truncatedagreects}
\end{equation}
Also, we have by the definitions and \cref{prop:limitinggivescentroids}
that there exists $\lambda_{\mathrm{c}}$ such that for every $\lambda \ge \lambda_{\mathrm{c}}$,
\begin{equation}
u_{\mu\mressmall U_\ell,\lambda,\infty}(x)=u_{\mu,\lambda,\infty}(x)=\centroid{\mu}(U_\ell),\qquad\text{for all }x\in U_\ell.\label{eq:agreegammainfty}
\end{equation}
By \cref{eq:agreegammainfty} and \cref{thm:convgamma}, we have
\[
\int_{U_\ell}|\centroid{\mu}(U_\ell)-u_{\mu\mressmall U_\ell,\lambda,\gamma}|^2\,\dif\mu=\int_{U_\ell}|u_{\mu\mressmall U_\ell,\lambda,\infty}-u_{\mu\mressmall U_\ell,\lambda,\gamma}|^2\,\dif\mu\le C\gamma^{-1/3}.
\]
By \cref{eq:truncatedagreects} and \cref{eq:Jbarabbrvtruncatebound},
we have, as long as \cref{eq:mindist} holds,
\[
\int_{U_\ell}\left|\overline{u}_{\mu,\lambda,\gamma}-u_{\mu\mressmall U_\ell,\lambda,\gamma}\right|^2\,\dif\mu=\int_{U_\ell}\left|\overline{u}_{\mu\mressmall U_\ell,\lambda,\gamma}-u_{\mu\mressmall U_\ell,\lambda,\gamma}\right|^2\,\dif\mu\le2M\lambda\gamma^{-1/3}.
\]
Combining the last two displays, we see that 
\[
\int_{U_\ell}\left|\overline{u}_{\mu,\lambda,\gamma}-\centroid{\mu}(U_\ell)\right|^2\,\dif\mu\le C(1+\lambda)\gamma^{-1/3}.
\]
Using \cref{eq:Jbarabbrvtruncatebound} again, this implies that
\begin{equation}
\int_{U_\ell}\left|u_{\mu,\lambda,\gamma}-\centroid{\mu}(U_\ell)\right|^2\,\dif\mu\le C(1+\lambda)\gamma^{-1/3}.\label{eq:continuous-convergence}
\end{equation}
On the other hand, by \cref{prop:glivenkocantelliwinfty}, we have for
each $\ell$ that
\begin{equation}
\mathbf{P}\left(\mathcal{W}_{\infty}\left(\frac{\mu\mres U_\ell}{\mu(U_\ell)},\frac{\mu_{N}\mres U_\ell}{\mu_{N}(U_\ell)}\right)\ge CN^{-1/(d\vee2)}(\log N)^{1/d'}\right)\le CN^{-100}.\label{eq:probbound}
\end{equation}
By \cref{prop:Winftystability}, for each $\ell$ there is an $\infty$-optimal transport plan $\pi_{\ell,N}$ between $\frac{\mu\mressmall U_\ell}{\mu(U_\ell)}$ and $\frac{\mu_N\mressmall U_L}{\mu_N(U_\ell)}$ %
such that, using also \cref{eq:truncatedagreediscrete} and \cref{eq:truncatedagreects}, 
we have
\[
\iint_{U_\ell^2}|u_{\mu,\lambda,\gamma}(x)-u_{\mu_{N},\lambda,\gamma}(\tilde x)|^2\,\dif\pi_{\ell,N}(x,\tilde x)\le C{(\gamma+1)}\mathcal{W}_{\infty}\left(\frac{\mu\mres U_\ell}{\mu(U_\ell)},\frac{\mu_{N}\mres U_\ell}{\mu_{N}(U_\ell)}\right).
\]
Combining this with \cref{eq:continuous-convergence}, we see that
\begin{align*}
\frac{1}{\mu_N(U_\ell)}&\int_{U_\ell}|u_{\mu_{N},\lambda,\gamma}-\centroid{\mu}(U_\ell)|^2\,\dif\mu_{N}\\ & =\iint_{U_\ell^2}|u_{\mu_{N},\lambda,\gamma}(\tilde x)-\centroid{\mu}(U_\ell)|^2\,\dif\pi(x,\tilde x)\\
 & \le C\left({(\gamma+1)}\mathcal{W}_{\infty}\left(\frac{\mu\mres U_\ell}{\mu(U_\ell)},\frac{\mu_{N}\mres U_\ell}{\mu_{N}(U_\ell)}\right)+(1+\lambda)\gamma^{-1/3}\right).
\end{align*}
Now summing over $\ell$ and using \cref{eq:probbound} and the fact that
the term inside the expectation on the left-hand side of \cref{eq:thm1statement} is bounded almost surely,
we obtain \cref{eq:thm1statement}.
\end{proof}

\bibliographystyle{abbrv}
\bibliography{localclustering}
\end{document}